\newtheorem{lemm}{Lemma}
\newtheorem{proposition}{Proposition}
\title{Adaptive Double-Exploration Tradeoff for Outlier Detection}
\author{
  Xiaojin Zhang,\textsuperscript{\rm 1}
  Honglei Zhuang,\textsuperscript{\rm 2}
  Shengyu Zhang,\textsuperscript{\rm 3}
  Yuan Zhou\textsuperscript{\rm 2}\\
  \textsuperscript{\rm 1}The Chinese University of Hong Kong,
  \textsuperscript{\rm 2}University of Illinois Urbana-Champaign,
  \textsuperscript{\rm 3}Tencent\\
  xjzhang@cse.cuhk.edu.hk,
  hzhuang3@illinois.edu,
  shengyuzhang@gmail.com,
  yuanz@illinois.edu}
\begin{document}
\maketitle
\begin{abstract}
We study a variant of the thresholding bandit problem (TBP) in the context of outlier detection, where the objective is to identify the outliers whose rewards are above a threshold. Distinct from the traditional TBP, the threshold is defined as a function of the rewards of all the arms, which is motivated by the criterion for identifying outliers. The learner needs to explore the rewards of the arms as well as the threshold. We refer to this problem as "double exploration for outlier detection". We construct an adaptively updated confidence interval for the threshold, based on the estimated value of the threshold in the previous rounds. Furthermore, by automatically trading off exploring the individual arms and exploring the outlier threshold, we provide an efficient algorithm in terms of the sample complexity. Experimental results on both synthetic datasets and real-world datasets demonstrate the efficiency of our algorithm.
\end{abstract}

\section{Introduction}

%[ Development of a field/ problem]\\

Multi-armed bandit (MAB) problems model the tradeoff between \textit{exploration} and \textit{exploitation} inherent in a great amount of sequential decision problems. In the canonical multi-armed bandit problem, the learner is presented with a set of arms. The learner needs to explore distinct arms to discover the potential of various arms, and exploit the most rewarding arms based on the information that has been collected. The goal of the learner is to maximize the cumulative reward \cite{robbins1985some,auer2002finite}. In terms of the \textit{pure exploration} problem, the learner is interested in optimizing the performance in some decision-making tasks by making full use of the limited budget \cite{bubeck2011pure,gabillon2011multi}. A flurry of work focuses on identifying the set of $K$ arms with the largest expected rewards, called the top-$k$ arm identification problem \cite{mnih2008empirical,kalyanakrishnan2010efficient,kalyanakrishnan2012pac,kaufmann2013information,zhou2014optimal,chen2014combinatorial}. This line of work could be categorized into two distinct settings: \textit{fixed confidence} and \textit{fixed budget}. The former setting focuses on identifying the best action satisfying a fixed confidence level using the minimum number of samples, while the latter one aims at maximizing the probability of outputting the best action using a fixed number of samples.

In this paper, we focus on the outlier detection problem, a practically important problem which was firstly investigated in the framework of multi-armed bandit under the \textit{fixed confidence} setting by \citeauthor{zhuang2017identifying} \shortcite{zhuang2017identifying}. They regard the outliers as the arms with rewards above a certain threshold, which is defined based on a generalized statistical technique called $k$-sigma rule of thumb \cite{coolidge2012statistics}. Specifically, an arm is referred to as an outlier, if and only if the reward of which lies above $k$ standard deviations of the mean. The expected reward of each arm is unknown to the learner. Moreover, the threshold is a function of the rewards of all the arms and is also unknown, which is the distinguishing feature from the classical thresholding bandit problem and top-$k$ arm identification problem. The learner could select an arm to pull at each step, and the reward drawn from the distribution of pulled arm is observed as the feedback information. In this process, the rewards of the arms and the threshold are gradually learned. The final goal of the learner is to output the correct outlier set with high probability, and at the same time minimize the number of samples used.

The outlier detection problem in the framework of multi-armed bandit raises two main challenges to the design of algorithms. On the one hand, the learning strategies for outlier detection need to address the \textit{double-exploration} dilemma, i.e. the search for a balance between exploring the arms and exploring the threshold. If the learner plays exclusively on the arms that are closer to the threshold, he might fail to estimate the threshold efficiently. If the learner is persistent in sampling all the arms uniformly, he might spend too much effort on the arms that have already been identified. On the other hand, the construction of the confidence interval for the threshold, which is related to the standard deviation of the rewards of all the arms, is not an easy task. The tighter the confidence interval is, the smaller sample complexity the resulting algorithm could obtain.

We note that the confidence radius for the threshold constructed by \citeauthor{zhuang2017identifying} \shortcite{zhuang2017identifying} increases with the number of arms, which might make the learning strategy inefficient in terms of the sample complexity. This issue becomes more severe when identifying outliers among a larger set of arms. Our approach alleviates this problem by constructing a confidence radius for the threshold which is independent of the number of arms and could be tuned adaptively. Besides, their sampling strategies are forced to continue sampling the arms that have already been identified, since the confidence radius for the threshold is related to the harmonic mean of the number of samples of all the arms. We devise distinct sampling strategies for estimating the expected rewards of the arms and the thresholds separately, and make it feasible to directly remove the arms that have been identified. We provide an algorithm that balances between these two sampling strategies in an adaptive manner, and provide a theoretical guarantee in terms of both correctness and sampling complexity. We further apply our algorithm to both synthetic datasets and real-world datasets. Experimental results demonstrate that our algorithm achieves considerable improvement over the state-of-the-art algorithms for outlier detection in the MAB framework.
%, and has better ...
%We observe that the confidence radius for the threshold constructed by \citet{zhuang2017identifying} scales with the number of arms $n$ when the total number of samples is fixed, which implies that with more arms presented, the confidence radius of the threshold turns to be larger. 
%Intuitively,...     The time complexity is therefore ...

%With the newly designed sampling approaches for estimating the threshold $\theta$, 

%it is tempting to utilize the Top-K identification approach or the Best arm identification approach for identifying the outliers efficiently. 

%I think perhaps we could devise an adaptive algorithm to adjust the radius of $\theta$ and arms $i$, perhaps follow the WRR algorithm? 

%optimizing the performance of the

%We improve the confidence interval for $\theta$, making it not scale to the number of arms $n$. Besides, our algorithm also allows for not pulling the arms that have been identified as a normal arm or an outlier... (this could be used to illustrate the shortage of the prior algs)

%For the arms that have been determined, we do not need to continue to sample them. But for  Regarding to the estimation of $\theta$ and the reward of each arm.  For estimating the rewards of the arms, we could remove the arms once they are determined. 

%%% can be straightforwardly modified to call...

\section{Related Work}

%"The term "outlier" refers to an observation that is far from the rest of the observations \cite{}.  The three-sigma rule is one of the basic statistical techniques for outlier detection." 

%%%%Two major settings are focused in the framework of pure exploration. One setting aims at outputting the best action within a fixed number of samples, referred to as the fixed-budget setting. The other setting focuses on identifying the best action satisfying certain confidence level with as few samples as possible, referred to as the fixed-confidence setting.

%In terms of the fixed-budget setting, the learner is required to return the arm set with a given number of sampling times, aiming at maximizing the confidence interval of the identified arm set.

%In the fixed confidence setting, the learner is asked to return the arm set satisfying given confidence level, and the assessment criterion for this kind of problem is the sample complexity. 

%When it comes to the \textit{pure exploration MAB}, the performance of the learner is assessed by the outputted arms. 

%the learner aims at selecting the best arms with minimal number of samples. % instead of maximizing the cumulative rewards. 

%The best arm identification problem dates back to 1950s [(refer to the complexity related paper)],

Since the first introduction of the multi-armed bandit by \citeauthor{thompson1933likelihood} \shortcite{thompson1933likelihood} in the scenario of medical trials, it has received a great amount of interest. The goal of the learner is to minimize the cumulative regret. In the pure exploration setting, the learner is assessed in terms of the simple regret instead of the cumulative regret. \citeauthor{locatelli2016optimal} \shortcite{locatelli2016optimal} investigates a specific pure exploration problem in the fixed budget setting, referred to as the thresholding bandit problem. This problem aims at finding the arms with rewards larger than a given threshold within a fixed time horizon.

There has been a large body of work on the problem of best arm identification in both fixed confidence and fixed budget setting in the literature \cite{even2002pac,bubeck2011pure,gabillon2012best,jamieson2014lil,kaufmann2016complexity}. A line of work uses the elimination-based approach, which successively removes the arms that are believed to be suboptimal with certain confidence \cite{paulson1964sequential,maron1994hoeffding,even2002pac}. \citeauthor{mnih2008empirical} \shortcite{mnih2008empirical} proposed an algorithm taking advantage of the empirical variance of the arms. This problem is further extended to the top-$k$ arm identification problem.

%selecting the $K$ best arms satisfying a given confidence
%the budget they consumed and the quality of the returned item 
%Distinct from the traditional MAB, there is no explicit trade-off between exploration and exploitation. 
%Distinct from the traditional MAB, there is no explicit trade-off between exploration and exploitation.
%multiple best arms 
% in the context of medical trials
%An elimination-based algorithm for the identifying the single best arm in the fixed confidence setting was initially proposed by \citet{paulson1964sequential}. \cite{maron1994hoeffding} proposed an elimination algorithm with regard to fixed confidence and fixed maximum number of rounds. 
%the exploration phase and the evaluation phase are independent in the pure exploration problem, and thus 

%\cite{kalyanakrishnan2010efficient} firstly extend to Top-K? VIP
%\cite{kalyanakrishnan2012pac} proposed an algorithm based on the lower and upper confidence bounds instead of using the classical elimination-based algorithm. VIP

Our problem fits into the fixed confidence framework, but is distinct from the top-$k$ arm identification problem. The number of outliers is not fixed, and depends on the rewards of all the arms, thereby could not be directly reduced to the top-$k$ arm identification problem. Another line of work relaxed the classical optimal arm identification problem to $\epsilon$-optimal arm identification, aiming at finding an arm which is at least $\epsilon$-close to the optimal arm with probability at least $1-\delta$  \cite{domingo2002adaptive,even2006action}. 
\citeauthor{kalyanakrishnan2010efficient} \shortcite{kalyanakrishnan2010efficient} generalized the setting as identifying $(\epsilon, m)$-optimal arm, the rewards of which is within $\epsilon$ of the $m$-th optimal arm. \citeauthor{kalyanakrishnan2012pac} \shortcite{kalyanakrishnan2012pac} further proposed the LUCB algorithm based on the upper and lower confidence bound for the generalized case. In this paper, we focus on the case when $\epsilon$ is $0$. That is, the arms with rewards above the exact threshold are regarded as the outliers.

%In this paper, we focus on the case when $\epsilon = 0$. Specifically, the arms with rewards above the threshold are regarded as the outliers.
%The number of outliers is not fixed, depending on the rewards of all the arms. Thus, this problem could not be directly reduced to the top-K arm identification problem. 
%and it is ... 
%  \cite{even2006action
%%%For the above two settings, the outputted arms are all required to be optimal. VIP

%with as few number of samples as possible. 

%[How do people solve the outlier detection problem before?]\\

%why the work of chen wei is not directly applicable?

%"A simple outlier detection technique, often used in process quality control domain [Shewhart 1931], is to declare all data instances that are more than 3? distance away from the distribution mean ?, where ? is the standard deviation for the distribution. The $? ± 3?$ region contains 99.7% of the data instances."

The problem of outlier detection has been widely investigated in the field of data mining \cite{chandola2009anomaly}. The approaches used for outlier detection include classification-based \cite{de2000reject,roth2006kernel}, clustering-based \cite{ester1996density,yu2002findout}, nearest neighbor-based \cite{guttormsson1999elliptical,kou2006spatial}, and statistical techniques \cite{torr1993outlier,horn2001effect}. Most existing approaches on outlier detection do not belong to the domain of online algorithms. The first work casting the outlier detection problem into the MAB framework belongs to a recent work proposed by  \citeauthor{zhuang2017identifying} \shortcite{zhuang2017identifying}, which is the work mostly related to ours. They proposed two algorithms named \textup{RR} and \textup{WRR}, aiming at finding the correct outlier set with high probability. The rewards of the arms and the threshold are unknown but could be learned by sampling. RR algorithm simply samples each arm evenly in an iterative manner, while WRR algorithm allocates more samples to the arms that are not yet identified. The confidence radius for the threshold constructed in their work scales with the number of arms, which may greatly hinder the proposal of an efficient algorithm. Our work overcomes this deficiency by constructing a confidence radius for the threshold which is independent of the number of arms (if we ignore logarithmic factors), and provides a more flexible trade-off between exploring the arms and exploring the threshold.

\section{Problem Formulation}

%Each arm is associated with a reward distribution, which is supported on $[a,b]$ with mean $y_i$. 
% $E=\{1,2,\dots,n\}$. 

%The distributions of the rewards of the arms are independent of each other. 

%The formula for the standard deviation for a population is as follows:

%We use three-sigma rule to define the outliers, which is one of the basic statistical techniques .\cite{coolidge2012statistics, akbari2016automated}
%The arm is referred to as an outlier, if and only if the rewards of which fall above $3$ standard deviations of the mean.

%$3\sigma$ - rule.  three times its standard deviation (SD).
%$i\in E$

In this section, we introduce the outlier detection problem in the framework of multi-armed bandits. The learner is presented with a set of $n$ arms which are enumerated by $[n] = \{1,2,\dots,n\}$. The reward distribution associated with each arm $i$ is bounded in $[a,b]$ with mean $y_i$. Without loss of generality, we assume that $a\ge 0$. Let $R = b-a$, and $R' = b^2-a^2$. Each pull of arm $i\in [n]$ generates a sample drawn from the distribution corresponding to arm $i$, which is independent of the historical pulls. The term "outlier" refers to the arm whose reward lies above $k$ standard deviations of the mean. Consequently, the threshold for distinguishing the outliers from the normal arms is defined as:
\begin{align} 
\theta = \mu_y + k\sigma_y,
\end{align}
where $\mu_y =\displaystyle\frac{1}{n} \sum_{i=1}^n y_i$ is the mean of all the arms, $\sigma_y = \sqrt{\displaystyle\frac{1}{n}\sum_{i=1}^n(y_i - \mu_y)^2}$ is the standard deviation of all the arms, and $k$ is a constant that could be set based on specific application scenario.

%where $\mu_y =\displaystyle\frac{1}{n} \sum_{i=1}^n y_i$, and $\sigma_y = \sqrt{\displaystyle\frac{1}{n}\sum_{i=1}^n(y_i - \mu_y)^2}$.

%as the arms with rewards above the threshold $\theta$. That is, arm $i$ is referred to as an outlier, if and only if the reward of arm $i$ lies above $k$ standard deviations of the mean. 

%We firstly intADEuce the threshold for distinguishing the outliers from the normal arms, which is constructed based on a generalized statistical technique called $k$-sigma rule of thumb \cite{coolidge2012statistics}. 
%denote $\mu_y =\displaystyle\frac{1}{n} \sum_{i=1}^n y_i$, and $\sigma_y = \sqrt{\displaystyle\frac{1}{n}\sum_{i=1}^n(y_i - \mu_y)^2}$, 

%With the definition of $\theta$, the outliers could be easily illustrated as the arms with rewards above the threshold $\theta$. That is, arm $i$ is referred to as an outlier, if and only if the reward of arm $i$ lies above $k$ standard deviations of the mean. 

%%% $\mathcal O = \{S: S\in 2^{[n]}\}$ 
%Let $\mathcal O = \{S: S\subset [n] \}$ denote all possible subset constructed by the $n$ arms. (search for wiki for the correct expression) 
Let $\mathcal O = \{S: S\subset [n] \}$ be the set of all subsets of $[n]$. Define $O^* = \{i\in [n]: y_i\ge\theta\}$ as the correct outlier set which contains the arms with expected reward above the threshold $\theta$. The goal of the learner is to identify the correct outlier set $O^{*}$ from $\mathcal O$ by sampling the arms in the following sequential manner. 

Initially, the learner is presented with $n$ arms, he is therefore aware of all the probable combinations of the arms $\mathcal O$, but the reward distributions of the arms are unknown to the learner. At each round, the learner selects an arm to pull according to the sampling strategy, and the reward drawn from the corresponding distribution is revealed to the learner. This process continues until the learner has identified a set $O$ satisfying that $\mathbb P(O = O^*)\ge 1-\delta$ for a given confidence parameter $\delta$. The performance of the learner is measured by the sample complexity, which is the number of samples it requires with high probability.

%from all the subsets $\mathcal O$ 
%the learner selects an arm to pull according to the sampling strategy
% with a fixed confidence

%$\Delta_{i,\theta}$ illustrates how close the reward of arm $i$ is to the threshold.
%(We will use the following notations in the remaining part:)

%Note that the analysis and the algorithms in this paper could be easily extend to the case when the , and also when...

\section{Algorithms and Results}
In this section, we illustrate an online learning algorithm for the outlier detection problem in the fixed confidence setting, and present the theoretical results including the correctness guarantee and sample complexity.
%next we will use an hour to prepare for the description of the algorithm. And then, you could go to return the hw to prof. ..., and then you could buy something for eat.

%\subsection{Algorithm}%\textit{Randomized Outlier Detection}

\subsection{The Algorithm}
%Before illustrating the detailed algorithm, we firstly present two distinct sampling strategies and the confidence intervals constructed for the expected rewards of the arms and the threshold.

%In order to separate the arms based on the unknown threshold correctly with high probability, and at the same time using ... samples. .. The algorithm needs to balance well between the exploration for the rewards of the arms and the threshold, which is referred to as \textit{Double-Exploration}.

%estimators for the threshold and the corresponding confidence intervals.

%The arms with expected reward deviating the mean $k$ standard deviation are referred to as the outliers. 

%We therefore need to estimate the rewards of each arm and the threshold by sampling the arms and collect the feedback information. 

%Two distinct sampling strategies are designed for estimating the expected rewards of the arms and the threshold separately.

The arms with expected reward deviating the mean $k$ standard deviation are referred to as the outliers. In order to efficiently identify the outliers, the algorithm needs to balance well between exploration for the rewards of the arms and exploration for the threshold. Two distinct sampling strategies are designed to address the \textit{double-exploration} dilemma. Specifically, we use a sequential sampling approach to estimate the expected rewards of the arms, and a random sampling approach to estimate the threshold separately.

%An adaptive sampling strategy is designed, which either performs a sequential sample for the estimating the rewards of the arms, or a randomized sample for estimating the threshold.
%for estimating the rewards of the arms and the threshold
%Two distinct sampling strategies are designed for estimating the rewards of the arms and the threshold separately.

%Note that $\theta$ is a function of mean $\mu_y$ and standard deviation $\sigma_y$. 
%We design two sample strategies for $\theta$ and the arms separately.
\begin{itemize}
\item\textbf{Sequential Sample for The Arms}\; The learner pulls each arm $i$ in the candidate set once, and observes the rewards drawn from the corresponding distribution. The observed rewards are used to estimate the expected rewards of the arms.\\
\item\textbf{Random Sample for The Threshold}\; The learner samples an arm uniformly at random from the $n$ arms, and pulls the sampled arm twice. The observed rewards are used to estimate the threshold.
\end{itemize}
%%%%sampled reward
%$y_{l}^{i}$ indicates the sampled reward of arm $i$ at its $l$-th trial.

%For each arm $i$ and all rounds $t$, we 
Let $m_{i,t}$ be the number of times arm $i$ is sampled prior to round $t$ in the process of sequential sample, and by $y_{i, 1}, y_{i, 2},\dots, y_{i, m_{i,t}}$ the sequence of the sampled rewards. The estimator for $y_i$ at round $t$ is the empirical mean of arm $i$ after $m_{i,t}$ samples, represented as 
\begin{align}
\hat y_{i,t} = \displaystyle\frac{1}{m_{i,t}} \sum_{l=1}^{m_{i,t}}y_{i, l}.\label{hat_y}
\end{align}

%The rewards observed at the $t$-th sample for the threshold are denoted as $x_{t,1}, x_{t,2}$, and the rewards observed at the $t$-th sample for the arm $i\in [n]$ is denoted as $y_{i, t}$. 
Let $m_{\theta,t}$ be the number of times the learner performs random sample for the threshold prior to round $t$, and by $(x_{1,1},x_{1,2}),(x_{2,1},x_{2,2}),\dots,(x_{m_{\theta,t},1},x_{m_{\theta,t},2})$ the sequence of associated rewards. The estimator for threshold $\theta$ at round $t$ is
\begin{align}
 \hat\theta_t = \hat\mu_{y,t} + k\hat\sigma_{y,t},\label{hat_theta}
\end{align}
where $\hat\mu_{y,t} = \displaystyle\frac{1}{m_{\theta,t}}\sum_{l=1}^{m_{\theta,t}} x_{l,1}$, and $\hat\sigma_{y,t} = \sqrt{\Bigg|\displaystyle\frac{1}{m_{\theta,t}}\sum_{l=1}^{m_{\theta,t}} x_{l,1}x_{l,2} - \displaystyle\frac{1}{{m_{\theta,t}}^2}\sum_{l=1}^{m_{\theta,t}}\sum_{h=1}^{m_{\theta,t}} x_{l,1} x_{h,2}\Bigg|}$.

\iffalse
\begin{figure}[h]
\centering
\includegraphics[width = 0.95\columnwidth]{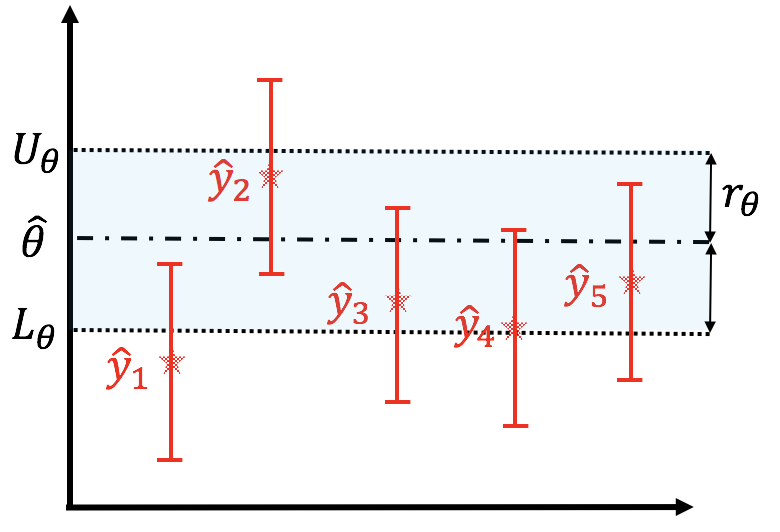} %0.9
\caption{The initial round}
\label{fig:The number of samples w.r.t distinct number of samples}
\end{figure}

\begin{figure}[h]
\centering
\includegraphics[width = 0.95\columnwidth]{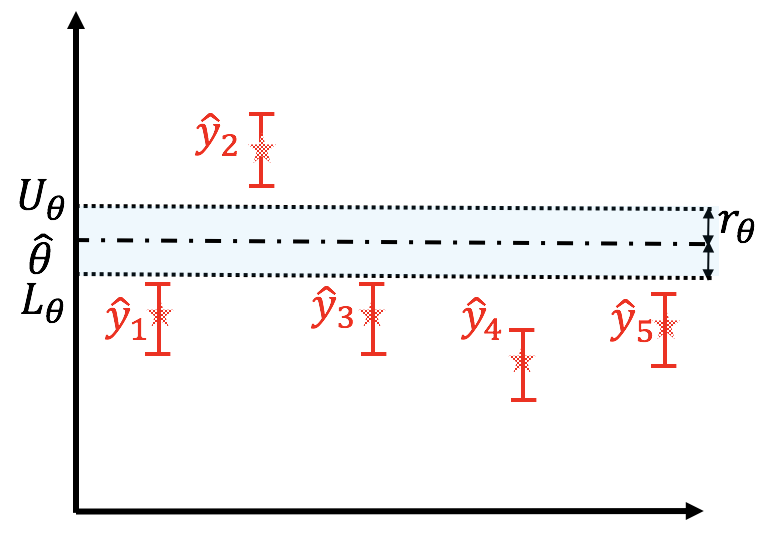} %0.9
\caption{The termination round}
\label{fig:The number of samples w.r.t distinct number of samples}
\end{figure}
\fi

\begin{figure*}[h]


   \centering 
   \subfigure[The initial round]{
   \includegraphics[width=.85\columnwidth]{Alg_Fig_05180.png}
   \label{fig:The number of samples w.r.t distinct number of samples}
   }
   \subfigure[The termination round]{
   \includegraphics[width=.85\columnwidth]{Alg_Fig_05181.png}
   \label{fig:The number of samples w.r.t distinct number of samples}
   }
   \caption{Illustration of outlier detection based on confidence interval} 
   \label{fig:Algorithm Illustration}
\end{figure*}

\iffalse
\begin{figure*}[h]
\centering
\begin{subfigure}[t]{0.485\linewidth}%0.5
   \centering
   \includegraphics[width = 6.32cm]{Alg_Fig_05180.png}
   \caption{The initial round}
\end{subfigure}

\subfigure[The termination round]{
\begin{subfigure}[t]{0.485\linewidth}
\centering
\includegraphics[width = 6.32cm]{Alg_Fig_05181.png}%{Alg_Fig_2_2.png} %0.9
\end{subfigure}
}
\caption{Illustration of outlier detection based on confidence interval}
\label{fig:Algorithm Illustration}
\end{figure*}
\fi
%For all $i\in [n]$, the ...

%Denote by $r_{i,t} = R\sqrt{\displaystyle\frac{\log(1/\delta_t)}{2m_{i,t}}}$, and $r_{\theta,t} = (R+\sqrt{2}k\displaystyle\frac{(R+2b)}{\sqrt{\hat\sigma_{y, t}^2+\epsilon_{\sigma,t}}})\sqrt{\displaystyle\frac{\log(1/\delta_t)}{2m_{\theta,t}}}$, where $\delta_t = \displaystyle\frac{3\delta}{(n+4)\pi^2 t^2}$, and $\epsilon_{\sigma,t} = (R+2b)\sqrt{\displaystyle\frac{\log(1/\delta_{t})}{2m_{\theta,t}}}$. 

%\subsection{Algorithm}

The pseudo-code of our proposed algorithm \textup{Adaptive Double Exploration} (\textup{ADE}) is illustrated in Algorithm 1. Let $m_{a,t}$ be the number of times the learner performs sequential sampling prior to round $t$. In the initialization phase, we perform two sampling strategies separately, ensuring that $m_{\theta,t} \ge 1$ and $m_{a,t}\ge 1$ after the initialization steps.

%\textbf{Adaptive Double-Exploration Tradeoff}: 
%In the double-exploration phase, we 
At each round $t$, the algorithm chooses to sample for either the arms or the threshold, whichever has a larger confidence radius. As a result, the sampling for estimating the threshold and the rewards are balanced well in an adaptive manner. Then, it calculates the estimator $\hat y_{i,t}$ on the expected reward of each arm $i\in [n]$ and the estimator $\hat\theta_t$ on the threshold $\theta$, which are defined based on Eq.(\ref{hat_y}) and Eq.(\ref{hat_theta}). Besides, the confidence radius $r_{i,t}$ for each arm $i\in [n]$ and $r_{\theta,t}$ for the threshold $\theta$ are also maintained.

%\textbf{Elimination}: 
We separate the outliers from the normal arms based on the calculated confidence intervals in the following manner. If the lower confidence bound of arm $i$ at round $t$ (represented as $L_{i,t}$) is greater than or equal to the upper confidence bound of $\theta$ (represented as $U_{\theta,t}$), then arm $i$ is identified as an outlier. If $L_{\theta,t} \ge U_{i,t}$, then arm $i$ is regarded as a normal arm. Otherwise, arm $i$ remains in the candidate set and the algorithm continues to sample this arm in the next round. The arms that have been identified could be directly removed in the process of sequential sampling. As a result, the arms that are farther from the threshold could be pulled fewer times in the process of the sequential sampling. The algorithm is allowed to terminate until all the arms are determined as either an outlier or a normal arm. Figure 1 illustrates the confidence intervals of the arms and the threshold at the initial and termination rounds of the algorithm.

%Based on the calculated confidence interval, we could eliminate the determined arms in terms of the sequential sample strategy. 

%Note that no arm could be eliminated for the random sample even if it has been determined as outlier or normal arm.
%For the sequential sample, arm $i$ is characterized based on the calculated confidence interval. 

%Note that our algorithm constructs samples for estimating theta and the rewards of the arms separately, thus 

%It is worth mentioning that the determined arms could be directly removed, without affecting the estimation for theta.

%\ref{Alg_Condi1}

%We made an effort to optimize the constants of the upper bound of the complexity of our algorithm.

\begin{algorithm}[h]
\caption{Adaptive Double Exploration (ADE)}
\label{Framework:A basic algorithm for outlier detection 2}
\begin{algorithmic}[1]
\STATE{\textbf{Input:} $n$: the number of arms; $k$: threshold parameter}
\STATE{\textbf{Output:} the set of outliers}
\STATE{// Initialization}\label{Initialization_begin}
%\STATE{$ S_0 \leftarrow [n]$, $t\leftarrow 1$, $m_{\theta,t}\leftarrow 0$, $m_{a,t}\leftarrow 0$}
\STATE{$ S_0 \leftarrow [n]$, $m_{\theta,1}\leftarrow 0$, $m_{a,1}\leftarrow 0$, $t\leftarrow 1$}
%\FOR{$m = 1,2$}
\STATE{Sample an arm uniformly at random from the set $ S_0$, and pull this arm twice independently}
%\STATE{$t\leftarrow t+1, m_{\theta,t}\leftarrow m_{\theta,t-1}+1$}
\STATE{$m_{\theta,t+1}\leftarrow m_{\theta,t}+1, m_{a,t+1}\leftarrow m_{a,t}, t\leftarrow t+1$}
\STATE{Pull each arm $i\in S_{0}$ once}
%\STATE{$t\leftarrow t+1, m_{a,t}\leftarrow m_{a,t-1}+1$}
\STATE{$m_{\theta,t+1}\leftarrow m_{\theta,t}, m_{a,t+1}\leftarrow m_{a,t}+1$, $t\leftarrow t+1$}
\STATE{$S_{t} \leftarrow [n]$}
\STATE{Update $\hat\theta_{t}, r_{a,t}$, $r_{\theta,t}$ and $\hat y_{i,t}$ for $i\in S_{t}$}
%\ENDFOR\label{Initialization_end}
\WHILE{$S_{t}\neq\emptyset$}
%\STATE{// Update statistics}
%\STATE{Let $\delta_t = 3\delta/((n+4)\pi^2 t^2)$, $\hat\sigma_{y,t} = \sqrt{|1/m_{\theta,t}\sum_{l=1}^{m_{\theta,t}} x_{l,1}x_{l,2} - 1/{m_{\theta,t}}^2\sum_{i=1}^{m_{\theta,t}}\sum_{j=1}^{m_{\theta,t}} x_{i,1} x_{j,2}|}$, and $\epsilon_{\sigma,t} = (R+2b)\sqrt{\log(1/\delta_{t})/(2m_{\theta,t})}$}
%\STATE{// Calculate statistics}
%\STATE{$\hat y_{i,t} = 1/m_{i,t} \sum_{l=1}^{m_{i,t}}y_{l,i}$}

%\STATE{$\hat y_{i,t} = 1/m_{a,t} \sum_{l=1}^{m_{a,t}}y_{l,i}$}
%\STATE{$\hat\theta_{t} = 1/m_{\theta,t}\sum_{l=1}^{m_{\theta,t}} x_{l,1} + k\hat\sigma_{y,t}$}
%\STATE{$r_{a,t} = R\sqrt{\log(1/\delta_t)/(2m_{a,t})}$}
%\STATE{$r_{\theta,t} = (R+\sqrt{2}k(R+2b)/\sqrt{\hat\sigma_{y, t}^2+\epsilon_{\sigma,t}})\sqrt{\log(1/\delta_t)/(2m_{\theta,t})}$}

%\STATE{// Separate outliers from normal arms}
%\STATE{//Sample for $\theta$ or arms}
%\tcc{Sample for either theta or the arm whichever has larger radius}
\STATE{// Sample for either the threshold or the arm whichever has larger radius}
\IF{$r_{a,t}\le r_{\theta,t}$}\label{Alg_Condi1}
\STATE{Sample an arm uniformly at random from the set $S_0$, and pull this arm twice independently}
\STATE{$m_{\theta,t+1}\leftarrow m_{\theta,t}+1, m_{a,t+1}\leftarrow m_{a,t}$}
\ELSE
\STATE{Sample each arm $i\in S_{t}$ and pull this arm once}
\STATE{$m_{\theta,t+1}\leftarrow m_{\theta,t}, m_{a,t+1}\leftarrow m_{a,t}+1$}
\ENDIF
\STATE{$t\leftarrow t+1$}
\STATE{Update $\hat\theta_{t}, r_{a,t}$, $r_{\theta,t}$ and $\hat y_{i,t}$ for $i\in S_{t-1}$}
\STATE{// Distinguish the outliers from the normal arms}
\FOR{$i\in S_{t-1}$}
\IF{$\hat y_{i,t} +r_{i,t} \le \hat\theta_{t} - r_{\theta,t}$}% or $\hat y_{i}+r_{i}<\hat\mu-r_{\mu}$
\STATE{$N\leftarrow N \cup \{i\}$}
\STATE{$S_{t}\leftarrow S_{t-1} \setminus \{i\}$} \label{remove_determined_1}
\ELSIF{$\hat y_{i,t}-r_{i,t}\ge\hat\theta_{t}+r_{\theta,t}$}%or $\hat y_{i}-r_{i}>\hat\mu+r_{\mu}$
\STATE{$O\leftarrow O \cup \{i\}$}
\STATE{$S_{t}\leftarrow S_{t-1} \setminus \{i\}$} \label{remove_determined_2}
\ENDIF
\ENDFOR
%according to (\ref{haty}),(\ref{hattheta})}
\ENDWHILE
%\ENDFOR
\end{algorithmic}
\end{algorithm}

%\paragraph{Remark:}
\textbf{Remark:} The samples obtained could be used to estimate both the threshold and the expected rewards of the arms. The sample complexity could be reduced by at most a half, which is still of the same order as the algorithm we presented. To ensure that the main idea of our algorithm could be presented in a concise and clear way, we decided to present this version of the algorithm.

One of the key issues of designing an efficient algorithm lies in the construction of the confidence intervals. Let $\delta_t = 3\delta/((n+4)\pi^2 t^2)$, the confidence intervals constructed for the threshold and the arms are illustrated in the following section.

%\displaystyle\frac{3\delta}{(n+4)\pi^2 t^2}

%\iffalse
\subsection{Construction of Confidence Interval}%Sample Strategies and 

\noindent Conditioned on $m_{\theta,t}=m$, the confidence interval of $\mu_y$ is illustrated in the following lemma.
\begin{lemm}\label{lem_mu}
With probability at least $1-2\delta_t$, we have $|\hat\mu_{y,t}-\mu_{y}|\le r_{\mu,t}$, where $r_{\mu,t} = R\sqrt{\displaystyle\frac{\log(1/\delta_t)}{2m}}$, and $\delta_t = \displaystyle\frac{3\delta}{(n+4)\pi^2 t^2}$.
\end{lemm}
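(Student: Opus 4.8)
The plan is to recognize $\hat\mu_{y,t}$ as the empirical mean of $m$ i.i.d.\ bounded random variables and to apply Hoeffding's inequality. First I would establish that, conditioned on $m_{\theta,t}=m$, the estimator $\hat\mu_{y,t}$ is unbiased for $\mu_y$. Each $x_{l,1}$ is the reward of an arm $i_l$ drawn uniformly at random from $[n]$, so by the tower rule $\mathbb E[x_{l,1}] = \mathbb E[\mathbb E[x_{l,1}\mid i_l]] = \frac{1}{n}\sum_{i=1}^n y_i = \mu_y$. Averaging over $l=1,\dots,m$ gives $\mathbb E[\hat\mu_{y,t}\mid m_{\theta,t}=m] = \mu_y$.

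Next, since every arm's reward distribution is supported on $[a,b]$, the mixture distribution from which each $x_{l,1}$ is drawn is also supported on $[a,b]$, so each $x_{l,1}$ lies in an interval of length $R=b-a$. Conditioned on $m_{\theta,t}=m$, the variables $x_{1,1},\dots,x_{m,1}$ are independent (each random-sampling step draws a fresh arm and a fresh reward, independently of the past), so Hoeffding's inequality yields
\begin{align}
\mathbb P\big(|\hat\mu_{y,t}-\mu_y|\ge r \;\big|\; m_{\theta,t}=m\big)\le 2\exp\!\left(-\frac{2mr^2}{R^2}\right).
\end{align}
Setting $r = r_{\mu,t} = R\sqrt{\log(1/\delta_t)/(2m)}$ makes the right-hand side equal to $2\delta_t$, which is exactly the claimed bound, and the factor $2$ in ``$1-2\delta_t$'' is simply the two-sided form of the inequality.

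The one point that needs care is that $m_{\theta,t}$ is itself random and correlated with the observed rewards, because the algorithm decides whether to perform a threshold sample based on the current confidence radii $r_{a,t}, r_{\theta,t}$. I would handle this by working throughout on the event $\{m_{\theta,t}=m\}$ as the statement does, so that within this event the threshold-sampling subsequence forms an i.i.d.\ block of length $m$ and the Hoeffding bound above applies verbatim; an unconditional version then follows from a union bound over the possible values of $m$, which is absorbed by the $t^2$ (and $n$) factors hidden in $\delta_t$ when these per-round confidence events are later combined. This adaptivity issue is the only subtlety; the rest is a direct invocation of Hoeffding's inequality together with the unbiasedness computation above.
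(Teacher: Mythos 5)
Your proposal is correct and follows essentially the same route as the paper's proof: establish unbiasedness of $\hat\mu_{y,t}$ via the tower rule over the uniformly sampled arm, then apply Hoeffding's inequality to the $m$ independent $[a,b]$-bounded samples conditioned on $m_{\theta,t}=m$, and plug in $r_{\mu,t}$ to obtain the $2\delta_t$ bound. Your additional remark on handling the randomness of $m_{\theta,t}$ by conditioning and a later union bound over $m$ matches how the paper defers this issue to Lemma 5.
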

\begin{lemm}\label{sigma_2_append}
With probability at least $1-\delta_{t}$, we have that 
\begin{align}
|\hat\sigma_{y,t}^2-\sigma_{y}^{2}|\le\epsilon_{\sigma, t},
\end{align}
where $\epsilon_{\sigma, t} = (R'+2bR)\sqrt{\displaystyle\frac{\log(6/\delta_{t})}{2m}}$.
\end{lemm}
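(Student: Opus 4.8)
The plan is to prove the slightly stronger statement that the signed quantity
\begin{align}
V_t \;:=\; \frac{1}{m}\sum_{l=1}^{m} x_{l,1}x_{l,2} \;-\; \frac{1}{m^2}\sum_{l=1}^{m}\sum_{h=1}^{m} x_{l,1}x_{h,2} \nonumber
\end{align}
(for which $\hat\sigma_{y,t}^2 = |V_t|$ by definition) satisfies $|V_t - \sigma_y^2| \le \epsilon_{\sigma,t}$ with probability at least $1-\delta_t$ conditioned on $m_{\theta,t}=m$. The lemma then follows at once: since $\sigma_y^2 \ge 0$, the reverse triangle inequality gives $|\hat\sigma_{y,t}^2 - \sigma_y^2| = \big|\,|V_t| - \sigma_y^2\,\big| \le |V_t - \sigma_y^2| \le \epsilon_{\sigma,t}$.

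First I would rewrite $V_t$ so that Hoeffding's inequality applies. With $\bar x_1 := \frac1m\sum_l x_{l,1}$ and $\bar x_2 := \frac1m\sum_l x_{l,2}$, the double sum factors as $\frac{1}{m^2}\sum_{l,h} x_{l,1}x_{h,2} = \bar x_1\bar x_2$, so $V_t = W_t - \bar x_1\bar x_2$ with $W_t := \frac1m\sum_l x_{l,1}x_{l,2}$. Since the two pulls in the $l$-th threshold sample are independent draws from the same uniformly chosen arm, the tower rule gives $\mathbb E[x_{l,1}x_{l,2}] = \frac1n\sum_i y_i^2$ and $\mathbb E[x_{l,1}] = \mathbb E[x_{l,2}] = \mu_y$, whence $\mathbb E[W_t] - \mu_y^2 = \sigma_y^2$. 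This identity is the anchor for the purely algebraic decomposition
\begin{align}
V_t - \sigma_y^2 \;=\; \big(W_t - \mathbb E[W_t]\big) \;-\; \bar x_2\big(\bar x_1 - \mu_y\big) \;-\; \mu_y\big(\bar x_2 - \mu_y\big), \nonumber
\end{align}
obtained from $\bar x_1\bar x_2 - \mu_y^2 = \bar x_2(\bar x_1-\mu_y) + \mu_y(\bar x_2-\mu_y)$.

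Then I would control each of the three summands by a separate application of Hoeffding's inequality with failure probability $\delta_t/3$ and take a union bound, which accounts for the $6 = 2\cdot3$ inside the logarithm. Across the $m$ threshold samples all arm choices and pulls are independent, so: the products $x_{l,1}x_{l,2}$ lie in $[a^2,b^2]$ (here $a\ge0$ is used), giving $|W_t - \mathbb E[W_t]| \le R'\sqrt{\log(6/\delta_t)/(2m)}$; and each empirical mean $\bar x_j$ of $[a,b]$-valued variables obeys $|\bar x_j - \mu_y| \le R\sqrt{\log(6/\delta_t)/(2m)}$. Bounding $|\bar x_2|\le b$ and $|\mu_y|\le b$ in the two cross terms and summing, we obtain $|V_t - \sigma_y^2| \le (R'+2bR)\sqrt{\log(6/\delta_t)/(2m)} = \epsilon_{\sigma,t}$ on the intersection of the three events, whose probability is at least $1-\delta_t$. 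I expect the only delicate point to be the first step: recognizing that $W_t - \bar x_1\bar x_2$ concentrates about $\sigma_y^2$ via the conditional independence of the two pulls per threshold sample (the estimator in fact carries an $O(1/m)$ bias from the $l=h$ diagonal, but it is absorbed by the $O(1/\sqrt m)$ fluctuation bound on $\bar x_1\bar x_2 - \mu_y^2$ and never needs separate treatment). The remaining concentration steps are routine.
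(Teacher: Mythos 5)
Your proof is correct and follows essentially the same route as the paper: the same estimator $V_t$ with $\hat\sigma_{y,t}^2=|V_t|$, the same identity $\mathbb E[x_{l,1}x_{l,2}]=\frac{1}{n}\sum_i y_i^2$, a three-way union bound with Hoeffding at level $\delta_t/3$ each (giving the $6$ in the logarithm), and the observation that $\big||V_t|-\sigma_y^2\big|\le|V_t-\sigma_y^2|$ to handle the absolute value. The only difference is cosmetic: you bound the cross term via the telescoping identity $\bar x_1\bar x_2-\mu_y^2=\bar x_2(\bar x_1-\mu_y)+\mu_y(\bar x_2-\mu_y)$ with $|\bar x_2|,|\mu_y|\le b$, whereas the paper reduces $|\bar x_1\bar x_2-\mu_y^2|$ to $|(\bar x_j)^2-\mu_y^2|\le 2b\,|\bar x_j-\mu_y|$ via a non-negativity argument; both yield the same constant $R'+2bR$.
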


\iffalse
\begin{lem}\label{lem_sigma}
Denote $\hat\sigma_{y} = \sqrt{|V|}$, where. For any $\delta>0$, we have $|\hat\sigma_y - \sigma_y|\le \displaystyle\frac{\sqrt{2}\epsilon}{\sqrt{\hat\sigma_y^2+\epsilon}}$ with probability at least $1-\delta$, where $\epsilon = (R' /\sqrt{2}+\sqrt{2}bR)\sqrt{\displaystyle\frac{\log(6/\delta)}{m_{\theta,t}}}$. 
%We also have $|\hat\sigma_y - \sigma_y|\le \displaystyle\frac{\sqrt{2}\epsilon}{\sqrt{\sigma_y^2+\epsilon}}$ with probability at least $1-\delta$.
\end{lem}
\fi
\begin{proof}
\iffalse
\begin{align}
&V= \displaystyle\frac{1}{T-1}\sum_{t=1}^{T}(x_{t,1}x_{t,2} - \mathbb E[x_{t,1}x_{t,2}])-\displaystyle\frac{1}{T(T-1)}\sum_{t=1}^{T}(x_{t,1}-\mu_y)\sum_{t=1}^{T}(x_{t,2}-\mu_y)\nonumber\\
& + \displaystyle\frac{T}{T-1}\displaystyle\frac{1}{n}\sum_{i=1}^{n}y_{i}^{2}+\displaystyle\frac{1}{T(T-1)}(-T\mu_{y}\sum_{t=1}^{T}x_{t,1}-T\mu_{y}\sum_{t=1}^{T}x_{t,2}+T^{2}\mu_{y}^{2}).
\end{align}

\noindent Thus,
\begin{align}
&\displaystyle\frac{T-1}{T}V \nonumber\\
& = \displaystyle\frac{1}{T}\sum_{t=1}^{T}(x_{t,1}x_{t,2} - \mathbb E[x_{t,1}x_{t,2}])-\displaystyle\frac{1}{T^2}\sum_{t=1}^{T}(x_{t,1}-\mu_y)\sum_{t=1}^{T}(x_{t,2}-\mu_y)\nonumber\\
& + \displaystyle\frac{1}{T}\mu_{y}\left(\sum_{t=1}^{T}(\mu_{y} - x_{t,1})+\sum_{t=1}^{T}(\mu_{y} - x_{t,2})\right)+\sigma_{y}^{2}.
\end{align}

\noindent We have that,
\fi

%The statistics constructing sigmay are not independent, but we still derive considerable confidence radius. 
\begin{align}
    &\mathbb E[\displaystyle\frac{1}{m}\sum_{l=1}^{m} x_{l,1}x_{l,2}] = \displaystyle\frac{1}{m}\sum_{l=1}^{m}\mathbb E[x_{l,1}x_{l,2}]\nonumber\\
    &= \displaystyle\frac{1}{m}\sum_{l=1}^{m}\mathbb E[\mathbb E[x_{l,1}x_{l,2}|y]]=\displaystyle\frac{1}{n}\sum_{i=1}^n y_i^2.\nonumber
\end{align}

\iffalse
\begin{align}
    \mathbb E[\displaystyle\frac{1}{m^2}\sum_{l=1}^{m}\sum_{h=1}^{m} x_{l,1} x_{h,2}] &= \displaystyle\frac{1}{m^2}\sum_{l=1}^{m_{\theta,t}}\sum_{h=1}^{m} \mathbb E[x_{l,1} x_{h,2}]\nonumber\\
    &= \displaystyle\frac{1}{m_{\theta,t}^2}\sum_{l=1}^{m_{\theta,t}}\sum_{h=1}^{m} \mathbb E[x_{l,1}]\mathbb E[x_{h,2}]\nonumber\\
    &=\mu_y^2.
\end{align}
\fi

\iffalse
For $l\neq h$,
\begin{align}
    \mathbb E[x_{l,1} x_{h,2}] &= \mathbb E[x_{l,1} x_{h,2}]\nonumber\\
    &= \mathbb E[x_{l,1}]\mathbb E[x_{h,2}]\nonumber\\
    &=\mu_y^2.
\end{align}
\fi

\iffalse
We have,
\begin{align}
\sigma_{y}^{2} - V&= \displaystyle\frac{1}{m}\sum_{l=1}^{m}(\mathbb E[x_{l,1}x_{l,2}] - x_{l,1}x_{l,2})\nonumber\\
&+\displaystyle\frac{1}{m^2}\sum_{l=1}^{m}(\mu_y - x_{l,1})\sum_{l=1}^{m}(\mu_y - x_{l,2})\nonumber\\
& - \displaystyle\frac{1}{m}\mu_{y}\left(\sum_{l=1}^{m}(\mu_{y} - x_{l,1})+\sum_{l=1}^{m}(\mu_{y} - x_{l,2})\right).
\end{align}
\fi

\noindent Let $V_{t} = \displaystyle\frac{1}{m}\sum_{l=1}^{m} x_{l,1}x_{l,2} - \displaystyle\frac{1}{m^2}\sum_{l=1}^{m}\sum_{h=1}^{m} x_{l,1} x_{h,2}$. %We have, 
\begin{align}
\mathbb P&(|\sigma_{y}^{2} - V_{t}|\ge\epsilon)\nonumber\\
&\le\mathbb P\left(|\displaystyle\frac{1}{m} \sum_{l=1}^{m} (\mathbb E[x_{l,1}x_{l,2}]-x_{l,1}x_{l,2})|\ge\epsilon_1\right)\nonumber\\
&+ \mathbb P\left(|\displaystyle\frac{1}{m^2}\sum_{l=1}^{m}x_{l,1}\sum_{l=1}^{m}x_{l,2}-\mu_y^2|\ge\epsilon_2\right),\label{sigma_y}
\end{align}
\noindent where $\epsilon = \epsilon_1 + \epsilon_2$.

\noindent If $|(\displaystyle\frac{1}{m}\sum_{l=1}^{m}x_{l,1})^{2}-\mu_y^2|\le \epsilon_2$ and $|(\displaystyle\frac{1}{m}\sum_{l=1}^{m}x_{t,2})^{2}-\mu_y^2|\le \epsilon_2$, then we have $|\displaystyle\frac{1}{m^2}\sum_{l=1}^{m}x_{l,1}\sum_{l=1}^{m}x_{l,2}-\mu_y^2|\le \epsilon_2$, since $\displaystyle\frac{1}{m}\sum_{l=1}^{m}x_{l,1}\ge a \ge 0$, and $\displaystyle\frac{1}{m}\sum_{l=1}^{m}x_{l,2}\ge a \ge 0$. Therefore, the event $|\displaystyle\frac{1}{m^2}\sum_{l=1}^{m}x_{l,1}\sum_{l=1}^{m}x_{l,2}-\mu_y^2|\ge \epsilon_{2}$ implies that either $|(\displaystyle\frac{1}{m}\sum_{l=1}^{m}x_{l,1})^{2}-\mu_y^2|\ge \epsilon_2$ or $|(\displaystyle\frac{1}{m}\sum_{l=1}^{m}x_{l,2})^{2}-\mu_y^2|\ge \epsilon_2$. 
\iffalse
\noindent The event $|\displaystyle\frac{1}{T^2}\sum_{t=1}^{T}(\mu_y - x_{t,1})\sum_{t=1}^{T}(\mu_y - x_{t,2}) - \displaystyle\frac{1}{T}\mu_{y}(\sum_{t=1}^{T}(\mu_{y} - x_{t,1})+\sum_{t=1}^{T}(\mu_{y} - x_{t,2}))| \ge \epsilon_{2}$ implies that either $|(\displaystyle\frac{1}{T}\sum_{t=1}^{T}x_{t,1})^{2}-\mu_y^2|\ge \epsilon_2$ or $|(\displaystyle\frac{1}{T}\sum_{t=1}^{T}x_{t,2})^{2}-\mu_y^2|\ge \epsilon_2$. 
\fi
Thus, we have that
\begin{align}
&\mathbb P\left(|\displaystyle\frac{1}{m^2}\sum_{l=1}^{m}x_{l,1}\sum_{l=1}^{m}x_{l,2}-\mu_y^2|\ge\epsilon_{2}\right)\nonumber\\
&\le\mathbb P\left(|(\displaystyle\frac{1}{m}\sum_{l=1}^{m}x_{l,1})^{2}-\mu_y^2|\ge \epsilon_2\right) \nonumber\\
&+ \mathbb P\left(|(\displaystyle\frac{1}{m}\sum_{l=1}^{m}x_{l,2})^{2}-\mu_y^2|\ge \epsilon_2\right).\label{mu_y}
\end{align}

\noindent Combine Ineq.(\ref{sigma_y}) and Ineq.(\ref{mu_y}), we have
\begin{align}
\mathbb P&\left(|\sigma_{y}^{2} - V_{t}|\ge\epsilon\right)\nonumber\\
&\le\mathbb P\left(|\displaystyle\frac{1}{m} \sum_{l=1}^{m} (\mathbb E[x_{l,1}x_{l,2}]-x_{l,1}x_{l,2})|\ge\epsilon_1\right)\nonumber\\
&+\mathbb P\left(|(\displaystyle\frac{1}{m}\sum_{l=1}^{m}x_{l,1})^{2}-\mu_y^2|\ge\epsilon_2\right)\nonumber\\
&+ \mathbb P\left(|(\displaystyle\frac{1}{m}\sum_{l=1}^{m}x_{l,2})^{2}-\mu_y^2|\ge\epsilon_2\right).\nonumber
\end{align}
\noindent According to Hoeffding's inequality, we have $\mathbb P(|\displaystyle\frac{1}{m} \sum_{l=1}^{m} (\mathbb E[x_{l,1}x_{l,2}]-x_{l,1}x_{l,2})|\ge\epsilon_1)\le\delta_1$,
where $\epsilon_{1} = R' \sqrt{\log(2/\delta_1)/(2m)}$. Note that $\displaystyle\frac{1}{m} \sum_{l=1}^{m} x_{l,1} + \mathbb E[\displaystyle\frac{1}{m} \sum_{l=1}^{m} x_{l,1}]\le 2b$. Therefore,
\begin{align}
\mathbb P&\left(|(\displaystyle\frac{1}{m}\sum_{l=1}^{m}x_{l,1})^{2}-(\mathbb E[\displaystyle\frac{1}{m} \sum_{l=1}^{m} x_{l,1}])^2|\ge\epsilon_2\right)\nonumber\\
&\le\mathbb P\left(|\displaystyle\frac{1}{m}\sum_{l=1}^{m}x_{l,1} -\mathbb E[\displaystyle\frac{1}{m} \sum_{l=1}^{m} x_{l,1}]|\ge\displaystyle\frac{\epsilon_2}{2b}\right)\nonumber\\
&\le\delta_2,\nonumber
\end{align}
where $\epsilon_2 = 2bR\sqrt{\log(2/\delta_2)/(2m)}$.

\noindent Similarly, we have $\mathbb P\left(|(\displaystyle\frac{1}{m}\sum_{l=1}^{m}x_{l,2})^{2}-\mu_y^2|\ge\displaystyle\frac{\epsilon_2}{2b}\right)\le\delta_2$. Thus,
\begin{align}
\mathbb P&(|\sigma_{y}^{2} - V_{t}|\ge\epsilon)\nonumber\\
&\le\mathbb P\left(|\displaystyle\frac{1}{m} \sum_{l=1}^{m} \left(\mathbb E[x_{l,1}x_{l,2}]-x_{l,1}x_{l,2}\right)|\ge\epsilon_1\right)\nonumber\\
&+\mathbb P\left(|(\displaystyle\frac{1}{m}\sum_{l=1}^{m}x_{l,1})^{2}-\mu_y^2|\ge\epsilon_2\right)\nonumber\\
&+\mathbb P\left(|(\displaystyle\frac{1}{m}\sum_{l=1}^{m}x_{l,2})^{2}-\mu_y^2|\ge\epsilon_2\right)\nonumber\\
&\le\delta_1 + 2\delta_2.\nonumber
\end{align}

%We chose $\delta_1 = \displaystyle\frac{1}{1+2\sqrt{2}b}\delta$, and $\delta_2 = \displaystyle\frac{\sqrt{2}b}{1+2\sqrt{2}b}\delta$.

\noindent If we chose $\delta_1 = \delta_2 = \delta_t/3$, then $\epsilon_{1} = R' \sqrt{\displaystyle\frac{\log(6/\delta_{t})}{2m}}$ and $\epsilon_{2} = 2bR\sqrt{\displaystyle\frac{\log(6/\delta_{t})}{2m}}$. Thus, $|\sigma_{y}^{2} - V_t|\le\epsilon_{\sigma, t}$ with probability at least $1-\delta_{t}$, where $\epsilon_{\sigma, t} = \epsilon_1 + \epsilon_2 = (R'+2bR)\sqrt{\displaystyle\frac{\log(6/\delta_{t})}{2m}}$.

\noindent Note that $\hat\sigma_{y,t} = \sqrt{|V_{t}|}$, then with probability at least $1-\delta_t$,
\begin{align}
&|\sigma_y^2-\hat\sigma_{y,t}^2|\le|\sigma_{y}^{2} -V_{t}|\le\epsilon_{\sigma, t}.\nonumber
\end{align}

\end{proof}

\iffalse
\begin{appenlem8}
With probability at least $1-\delta_t$, we have that 
\begin{align}
|\hat\sigma_{y,t}^2-\sigma_{y}^{2}|\le\epsilon_{\sigma, t},
\end{align}
where $\epsilon_{\sigma, t} = (R'+2bR)\sqrt{\displaystyle\frac{\log(6/\delta_t)}{2m_{\theta,t}}}$.
\end{appenlem8}
\begin{proof}
Let $y_t = x_{t,1}x_{t,2} - \displaystyle\frac{1}{2(t-1)}x_{t,1}\sum_{i=1}^{t-1} x_{i,2} - \displaystyle\frac{1}{2(t-1)}x_{t,2}\sum_{i=1}^{t-1} x_{i,1} - \sigma_y^2$, and define $G_{T} = \sum_{t=2}^{T} y_{t}$.

Theorem $6.1$ from that paper about martingale.
\end{proof}
\fi

%%%newlynewly
\noindent Lemma 2 only informs us the gap between $\sigma_y^2$ and $\hat\sigma_{y,t}^2$, while what we are really interested is the gap between $\sigma_y$ and its estimator $\hat\sigma_{y,t}$. The confidence radius of $\sigma_y$ is constructed adaptively using $\hat\sigma_{y,t}$, which is formally stated in the following lemma.
\begin{lemm}
With probability at least $1-\delta_{t}$, we have $|\hat\sigma_{y,t} - \sigma_y|\le \sqrt{\displaystyle\frac{2}{U_{\sigma,t}}}\epsilon_{\sigma, t}$, where $U_{\sigma,t} = \min_{1\le\tau\le t}\{\hat\sigma_{y,\tau}^2+\epsilon_{\sigma,\tau}\}$, and $\epsilon_{\sigma,t} = (R'+2bR)\sqrt{\displaystyle\frac{\log(6/\delta_{t})}{2m}}$. 
%We also have $|\hat\sigma_y - \sigma_y|\le \displaystyle\frac{\sqrt{2}\epsilon}{\sqrt{\sigma_y^2+\epsilon}}$ with probability at least $1-\delta$.
\end{lemm}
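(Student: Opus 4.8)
The plan is to derive the confidence radius for $\sigma_y$ from the one for $\sigma_y^2$ already obtained in Lemma~\ref{sigma_2_append}, by carefully controlling the loss incurred when passing from $\sigma_y^2$ to $\sigma_y$ through the square root. The starting point is that, conditioned on $m_{\theta,t}=m$, the event $\{|\hat\sigma_{y,t}^2-\sigma_y^2|\le\epsilon_{\sigma,t}\}$ from Lemma~\ref{sigma_2_append} has probability at least $1-\delta_t$; I would prove the inequality deterministically on this single event, so that the stated failure probability $\delta_t$ follows immediately without any further union bound over rounds.

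Writing $s=\hat\sigma_{y,t}+\sigma_y$ (the case $s=0$ being trivial), the first step is to record two elementary bounds valid on the good event. Since $\hat\sigma_{y,t},\sigma_y\ge 0$ we have $|\hat\sigma_{y,t}-\sigma_y|\le s$, hence $(\hat\sigma_{y,t}-\sigma_y)^2\le s\,|\hat\sigma_{y,t}-\sigma_y|=|\hat\sigma_{y,t}^2-\sigma_y^2|\le\epsilon_{\sigma,t}$, which gives the crude additive bound $|\hat\sigma_{y,t}-\sigma_y|\le\sqrt{\epsilon_{\sigma,t}}$; and dividing the same identity by $s$ gives the multiplicative bound $|\hat\sigma_{y,t}-\sigma_y|\le\epsilon_{\sigma,t}/s$. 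The second step is to note the \emph{deterministic} inequality $U_{\sigma,t}\le\hat\sigma_{y,t}^2+\epsilon_{\sigma,t}\le s^2+\epsilon_{\sigma,t}$, which holds because $\hat\sigma_{y,t}^2+\epsilon_{\sigma,t}$ is one of the terms in the minimum defining $U_{\sigma,t}$ and $\hat\sigma_{y,t}^2\le s^2$.

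The third step is a short case analysis that stitches the two bounds together. If $s\ge\sqrt{\epsilon_{\sigma,t}}$, then $U_{\sigma,t}\le s^2+\epsilon_{\sigma,t}\le 2s^2$, so $s\ge\sqrt{U_{\sigma,t}/2}$, and the multiplicative bound yields $|\hat\sigma_{y,t}-\sigma_y|\le\epsilon_{\sigma,t}/s\le\sqrt{2/U_{\sigma,t}}\,\epsilon_{\sigma,t}$. If instead $s<\sqrt{\epsilon_{\sigma,t}}$, then $U_{\sigma,t}\le s^2+\epsilon_{\sigma,t}<2\epsilon_{\sigma,t}$, so $\sqrt{2/U_{\sigma,t}}\,\epsilon_{\sigma,t}>\sqrt{\epsilon_{\sigma,t}}$, and the crude additive bound yields $|\hat\sigma_{y,t}-\sigma_y|\le\sqrt{\epsilon_{\sigma,t}}<\sqrt{2/U_{\sigma,t}}\,\epsilon_{\sigma,t}$. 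In both regimes the claimed inequality holds.

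I expect the one genuinely delicate point to be precisely this transition near $\sigma_y=0$: when the true and estimated standard deviations are both small compared with $\epsilon_{\sigma,t}$, the map $x\mapsto\sqrt{x}$ is steep and the clean multiplicative estimate $\epsilon_{\sigma,t}/s$ degrades, which is why the additive bound $\sqrt{\epsilon_{\sigma,t}}$ must be kept in reserve and why the adaptive radius is written in terms of $U_{\sigma,t}$ — the threshold $U_{\sigma,t}=2\epsilon_{\sigma,t}$ is exactly where the two regimes agree. A secondary point worth verifying is that replacing $\hat\sigma_{y,t}^2+\epsilon_{\sigma,t}$ by the running minimum $U_{\sigma,t}\le\hat\sigma_{y,t}^2+\epsilon_{\sigma,t}$ can only enlarge the right-hand side $\sqrt{2/U_{\sigma,t}}\,\epsilon_{\sigma,t}$, so the adaptively shrinking radius remains a valid confidence radius and incurs no additional probabilistic cost.
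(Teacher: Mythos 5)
Your proof is correct and follows essentially the same route as the paper: both condition on the single event $|\hat\sigma_{y,t}^2-\sigma_y^2|\le\epsilon_{\sigma,t}$ from Lemma 2, handle the passage through the square root by a case analysis separating the "large variance" regime (where the multiplicative bound $\epsilon_{\sigma,t}/(\hat\sigma_{y,t}+\sigma_y)$ applies) from the "small variance" regime (where the additive bound $\sqrt{\epsilon_{\sigma,t}}$ takes over), and then use $U_{\sigma,t}\le\hat\sigma_{y,t}^2+\epsilon_{\sigma,t}$ to pass to the running minimum at no extra probabilistic cost. The only cosmetic difference is that you split into two cases on $s=\hat\sigma_{y,t}+\sigma_y$ versus $\sqrt{\epsilon_{\sigma,t}}$, whereas the paper splits into three cases on $\hat\sigma_{y,t}^2$ versus $\epsilon_{\sigma,t}$ and $\epsilon_{\sigma,t}/3$; your version is, if anything, slightly cleaner.
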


\noindent Now we could derive the confidence bound of $\theta$ with the confidence bound of $\mu_{y}$ and $\sigma_{y}$.

%%%%newadded1

\begin{lemm}\label{theta_confidence}
\iffalse
With probability at least $1-8\delta_t$, we have $|\hat\theta_t - \theta|\le r_{\theta,t}$, where $r_{\theta,t} = (R+\sqrt{2}k\displaystyle\frac{(R'+2bR)}{\sqrt{\hat\sigma_{y, t}^2+\epsilon_{\sigma,t}}})\sqrt{\displaystyle\frac{\log(1/\delta_t)}{2m_{\theta,t}}}$, and $\epsilon_{\sigma,t} = (R'+2bR)\sqrt{\displaystyle\frac{\log(1/\delta_t)}{2m_{\theta,t}}}$. 
\fi

Conditioned on $m_{\theta, t} = m$, we have $|\hat\theta_{t} - \theta|\le r_{\theta,t}$ with probability at least $1-8\delta_{t}$, where $r_{\theta,t} = \left(R+\sqrt{2}k\displaystyle\frac{(R'+2bR)}{\sqrt{U_{\sigma,t}}}\right)\sqrt{\displaystyle\frac{\log(1/\delta_{t})}{2m}}$, $U_{\sigma,t} = \min_{1\le\tau\le t}\{\hat\sigma_{y,\tau}^2+\epsilon_{\sigma,\tau}\}$, and $\epsilon_{\sigma,t} = (R'+2bR)\sqrt{\displaystyle\frac{\log(1/\delta_{t})}{2 m}}$. 

\end{lemm}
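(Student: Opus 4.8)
The plan is to derive the bound from a one-line triangle-inequality decomposition, after which the two confidence bounds already established for $\mu_y$ and $\sigma_y$ do all the work. Since $\hat\theta_t = \hat\mu_{y,t} + k\hat\sigma_{y,t}$ and $\theta = \mu_y + k\sigma_y$, I would first write
\[
|\hat\theta_t - \theta| = \bigl|(\hat\mu_{y,t} - \mu_y) + k(\hat\sigma_{y,t} - \sigma_y)\bigr| \le |\hat\mu_{y,t} - \mu_y| + k\,|\hat\sigma_{y,t} - \sigma_y|.
\]
Conditioning on $m_{\theta,t}=m$, I would bound the first term using Lemma \ref{lem_mu} by $r_{\mu,t} = R\sqrt{\log(1/\delta_t)/(2m)}$ on an event of probability at least $1-2\delta_t$, and the second term using the preceding lemma (the one that converts the $\sigma_y^2$-gap of Lemma \ref{sigma_2_append} into a $\sigma_y$-gap) by $k\sqrt{2/U_{\sigma,t}}\,\epsilon_{\sigma,t}$ on an event of probability at least $1-\delta_t$, where $\epsilon_{\sigma,t} = (R'+2bR)\sqrt{\log(1/\delta_t)/(2m)}$ and $U_{\sigma,t} = \min_{1\le\tau\le t}\{\hat\sigma_{y,\tau}^2+\epsilon_{\sigma,\tau}\}$.

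A union bound over these failure events keeps the total failure probability at most $3\delta_t \le 8\delta_t$; the looser budget $8\delta_t$ is what one wants downstream, so that summing over all rounds $t$ and all $n$ arms against the choice $\delta_t = 3\delta/((n+4)\pi^2 t^2)$ leaves the global failure probability below $\delta$. On the good event I would then just add the two bounds and factor out the common $\sqrt{\log(1/\delta_t)/(2m)}$:
\[
|\hat\theta_t - \theta| \le R\sqrt{\tfrac{\log(1/\delta_t)}{2m}} + \sqrt{2}\,k\,\tfrac{R'+2bR}{\sqrt{U_{\sigma,t}}}\sqrt{\tfrac{\log(1/\delta_t)}{2m}} = \Bigl(R + \sqrt{2}\,k\,\tfrac{R'+2bR}{\sqrt{U_{\sigma,t}}}\Bigr)\sqrt{\tfrac{\log(1/\delta_t)}{2m}} = r_{\theta,t}.
\]
Note that $U_{\sigma,t}$ enters only through the preceding lemma, where $U_{\sigma,t}\le\hat\sigma_{y,t}^2+\epsilon_{\sigma,t}$ is used to replace the round-$t$ denominator by a nonincreasing one, so no additional high-probability events are needed here.

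The algebra is routine, so the step I would treat with the most care is the conditioning $m_{\theta,t}=m$: because the ``random sample for the threshold'' step (draw an arm uniformly, pull it twice) is executed obliviously to the algorithm's branching and stopping, conditioning on the \emph{number} $m$ of such steps leaves $x_{1,1},x_{1,2},\dots,x_{m,1},x_{m,2}$ distributed as $m$ i.i.d.\ copies of that experiment, so Hoeffding's inequality applies verbatim as in Lemmas \ref{lem_mu} and \ref{sigma_2_append}. The only other nuisance is keeping the logarithmic constants consistent between those lemmas' statements (which naturally produce $\log(2/\delta_t)$ or $\log(6/\delta_t)$ from two-sided Hoeffding and a three-way split) and the $\log(1/\delta_t)$ written in $r_{\theta,t}$; this discrepancy is only a constant inside the logarithm and inflates the failure probability by at most a constant factor, comfortably inside the $8\delta_t$ slack.
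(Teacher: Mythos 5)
Your proposal is correct and follows essentially the same route as the paper: triangle inequality on $\hat\theta_t-\theta=(\hat\mu_{y,t}-\mu_y)+k(\hat\sigma_{y,t}-\sigma_y)$, then Lemma \ref{lem_mu} for the mean term, the adaptive $\sigma_y$-lemma for the deviation term, and a union bound. The only bookkeeping detail worth tightening is that, with $\epsilon_{\sigma,t}$ written using $\log(1/\delta_t)$ rather than $\log(6/\delta_t)$, the $\sigma_y$-event costs $6\delta_t$ rather than $\delta_t$, so the total is exactly $2\delta_t+6\delta_t=8\delta_t$ — which is precisely how the paper arrives at the constant $8$.
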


%\textit{Remark}
\textbf{Remark:} The estimator for the threshold constructed by \citeauthor{zhuang2017identifying} \shortcite{zhuang2017identifying} is $\tilde\theta_{t} = \tilde\mu_{y,t} + k\sqrt{\displaystyle\frac{1}{n}\sum_{i=1}^n (\tilde y_{i,t} - \tilde\mu_{y,t})^2}$, where $\tilde y_{i,t} = \displaystyle\frac{1}{m_{i,t}}\sum_{j=1}^{m_{i,t}}x_i^{(j)}$, and $\tilde\mu_{y,t} = \displaystyle\frac{1}{n} \sum_{i=1}^n\tilde y_{i,t}$. They state that with probability at least $1-2\tilde\delta_{t}$, $|\tilde\theta_t - \theta|\le R\sqrt{\displaystyle\frac{l(k)}{2h(m)}\log\Big(\displaystyle\frac{1}{\tilde\delta_{t}}\Big)}$, where $l(k) = \left[\sqrt{\displaystyle\frac{(1+k\sqrt{n-1})^2}{n}}+\sqrt{\displaystyle\frac{k^2}{2\log(\pi^2n^3/(6\tilde\delta_t))}}\right]^2$, $\tilde\delta_t=\displaystyle\frac{6\delta}{\pi^{2}(n+1)t^2}$, and $h(m)$ is the harmonic mean of $m_{i,t}$ over all the arms. The confidence radius of $\tilde\theta$ increases with $n$, which might make the learning strategy inefficient especially when identifying outliers for large-scale dataset. Our approach alleviates this problem by constructing a confidence radius for $\theta$ which is independent of the number of arms (if we ignore logarithmic factors).
%\label{Eq8}

%where $l(k) = \left[\sqrt{\displaystyle\frac{(1+k\sqrt{n-1})^2}{n}}+\sqrt{\displaystyle\frac{k^2}{2\log(\displaystyle\frac{\pi^2n^3}{6\delta})}}\right]^2$, and $h(m)$ is the harmonic mean of all the $m_i$.

%With the total number of samples for the arms fixed, 
%The confidence radius of $\tilde\theta_t$ as shown in Eq.(\ref{Eq8}) 

%n as shown in Lemma 5.

%Applying the union bound over all arms and the possible number of iterations, we conclude that the confidence intervals hold for the threshold and any arm $i$ at any round $t$ with probability at least $1-\delta$, which is formally illustrated in the following lemma.

%Define the random event 
%set $\delta_t = \displaystyle\frac{3\delta}{(n+4)\pi^2 t^2}$, and $\epsilon_{\sigma,t} = (R+2b)\sqrt{\displaystyle\frac{\log(1/\delta_{t})}{2m_{\theta,t}}}$, then we have that

\begin{lemm}
Conditioned on $m_{i,t}=m$, we have $|y_i-\hat y_{i,t}|\le r_{i,t}$ with probability at least $1-2\delta_t$, where $r_{i,t} = R\sqrt{\log(1/\delta_t)/(2m)}$.
%$r_{i,t} = R\sqrt{\displaystyle\frac{\log(1/\delta_t)}{2m}}$.
\end{lemm}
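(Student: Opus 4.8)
The plan is to recognize that, conditioned on $m_{i,t}=m$, the estimator $\hat y_{i,t}$ from Eq.~(\ref{hat_y}) is simply the empirical average of $m$ observations $y_{i,1},\dots,y_{i,m}$, each an independent draw from the reward distribution of arm $i$, which is supported on $[a,b]$ and has mean $y_i$. Since the sampling strategy only decides \emph{which} arm is pulled and never changes the conditional law of the freshly drawn reward, these $m$ observations are i.i.d.\ with mean $y_i$ and range $R=b-a$, so in particular $\mathbb E[\hat y_{i,t}\mid m_{i,t}=m]=y_i$. This is exactly the same setup as in Lemma~\ref{lem_mu} (and the $\epsilon_1$-term of Lemma~\ref{sigma_2_append}), so I would reuse that template.

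First I would invoke Hoeffding's inequality for the mean of $m$ i.i.d.\ variables bounded in an interval of length $R$: for any $\epsilon>0$,
\begin{align}
\mathbb P\big(|\hat y_{i,t}-y_i|\ge\epsilon \,\big|\, m_{i,t}=m\big)\le 2\exp\!\Big(-\frac{2m\epsilon^2}{R^2}\Big).\nonumber
\end{align}
Then I would set the right-hand side equal to $2\delta_t$, i.e.\ take $\epsilon=r_{i,t}=R\sqrt{\log(1/\delta_t)/(2m)}$, which immediately gives $\mathbb P(|\hat y_{i,t}-y_i|\ge r_{i,t}\mid m_{i,t}=m)\le 2\delta_t$, the claimed bound. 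No nontrivial estimate is required beyond this substitution.

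The only point that needs care—and the one place I would be explicit—is that $m_{i,t}$ is itself a random quantity correlated with the rewards of arm $i$ (arm $i$ leaves $S_t$ based on its own empirical mean), so "conditioning on $m_{i,t}=m$" is not conditioning on an event independent of the sample path. The clean fix is the standard one: for each fixed integer $m$, bound the average of the \emph{first} $m$ pulls of arm $i$, a quantity whose distribution does not depend on the algorithm at all, obtain the Hoeffding tail above, and absorb the dependence on $m$ (and on $t$) via a union bound; the factor $t^{-2}$ built into $\delta_t=3\delta/((n+4)\pi^2t^2)$ is precisely what makes that union bound summable when all per-round confidence events are combined in the correctness proof. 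I would state the lemma in the per-$m$ conditional form, as the authors do, and leave the union-bound bookkeeping to the global analysis; I do not anticipate any genuine obstacle.
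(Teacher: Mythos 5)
Your proposal is correct and matches the paper's own argument: the paper likewise proves this lemma by a direct application of Hoeffding's inequality to the empirical mean of $m$ bounded i.i.d.\ samples of arm $i$, choosing $\epsilon=r_{i,t}$ so that $2\exp(-2m r_{i,t}^2/R^2)=2\delta_t$. Your added remark about handling the randomness of $m_{i,t}$ by fixing $m$ and deferring the union bound over $m$ and $t$ to the global event $\mathcal A$ is exactly how the paper organizes the bookkeeping in Lemma \ref{A_Occur}.
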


%Thus, with probability at least $1-8\delta_{t}$, the estimator $\hat\theta_t$ lies in the confidence interval centered at $\theta$ with confidence radius $r_{\theta,t}$. With probability at least $1-2\delta_{t}$, it is satisfied that the estimator $\hat y_{i,t}$ lies in the confidence interval centered at $y_i$ with confidence radius $r_{i,t}$ for $i\in [n]$. 

Applying the union bound over all arms and the possible number of iterations, we conclude that the confidence intervals hold for the threshold and any arm $i$ at any round $t$ with probability at least $1-\delta$, which is formally illustrated in the following lemma.
\begin{lemm}\label{A_Occur}
Define random event $\mathcal A = \{|y_i-\hat y_{i}|\le r_{i,t},|\theta-\hat\theta|\le r_{\theta,t}, \forall i, \forall m_a, \forall m_\theta\}$ ($t=m_a+m_\theta$), we have that event $\mathcal A$ occurs with probability at least $1-\delta$.
\end{lemm}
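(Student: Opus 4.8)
The plan is a union bound over the rounds of the algorithm, with the per-round failure probabilities supplied by the preceding confidence lemmas; the constant $\delta_t = 3\delta/((n+4)\pi^2 t^2)$ has been chosen precisely so that the resulting series sums to $\delta$.

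First I would note that in any run the round counter $t$ takes the values $1,2,3,\dots$, and that at round $t$ the sample counts $m_{a,t},m_{\theta,t}$ and hence every $m_{i,t}$ are determined, so the quantifier ``$\forall m_a,\forall m_\theta$'' in the definition of $\mathcal A$ is the same as asking the confidence intervals to hold at every round $t$. This reformulation matters, since it is the round index $t$ (not a free pair of counts) over which the union must be taken in order for the series to converge. Thus
\begin{align}
\bar{\mathcal A}\subseteq\bigcup_{t\ge1}\Bigl(\{|\theta-\hat\theta_t|>r_{\theta,t}\}\cup\bigcup_{i\in[n]}\{|y_i-\hat y_{i,t}|>r_{i,t}\}\Bigr).
\end{align}

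Next, for a fixed round $t$ I would bound each of the $n+1$ events. For arm $i$, I would condition on the realized value $m$ of $m_{i,t}$, invoke the preceding per-arm lemma (which gives failure probability at most $2\delta_t$ conditioned on $m_{i,t}=m$, uniformly in $m$), and average over the law of $m_{i,t}$ to get $\mathbb P(|y_i-\hat y_{i,t}|>r_{i,t})\le 2\delta_t$; the identical argument with Lemma~\ref{theta_confidence}, conditioning on $m_{\theta,t}$, yields $\mathbb P(|\theta-\hat\theta_t|>r_{\theta,t})\le 8\delta_t$. A union bound over the $n$ arms and the threshold then gives failure probability at most $(2n+8)\delta_t$ at round $t$. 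Summing over $t$ and using $\sum_{t\ge1}t^{-2}=\pi^2/6$ together with $2n+8=2(n+4)$,
\begin{align}
\mathbb P(\bar{\mathcal A})\le\sum_{t\ge1}(2n+8)\delta_t=2(n+4)\cdot\frac{3\delta}{(n+4)\pi^2}\cdot\frac{\pi^2}{6}=\delta,
\end{align}
which gives $\mathbb P(\mathcal A)\ge 1-\delta$.

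The only point that needs a little attention is the passage from the conditional confidence bounds (the earlier lemmas are stated conditionally on the realized sample counts, because $m_{i,t}$ and $m_{\theta,t}$ are themselves random and depend on the observed rewards) to the unconditional per-round bounds; this is exactly the law-of-total-probability step above, and it goes through because the conditional failure probability is bounded by $2\delta_t$ (resp.\ $8\delta_t$) uniformly over the conditioned value. Apart from this, and from verifying that the arithmetic collapses to $\delta$ (which is where the particular form of $\delta_t$ is used), the argument is pure bookkeeping, so I foresee no real obstacle.
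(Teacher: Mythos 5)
Your proof is correct and follows essentially the same route as the paper's: a union bound over all confidence events, with per-event failure probabilities $2\delta_t$ (each arm) and $8\delta_t$ (threshold) supplied by the preceding lemmas, and the series $\sum_t(2n+8)\delta_t$ collapsing to exactly $\delta$. The only cosmetic difference is that you index the union by the round counter $t$ (handling the random sample counts via the law of total probability, which you correctly flag), whereas the paper indexes the arm events by $m_a$ and the threshold events by $m_\theta$ separately; both yield the same bound.
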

Therefore, with probability at least $1-\delta$, for any round $t$ and any arm $i\in [n]$, $L_{i,t} = \hat y_{i,t} - r_{i,t}$ and $L_{\theta,t} = \hat\theta_t - r_{\theta,t}$ could be regarded as the lower confidence bound of the expected reward $y_i$ and the threshold $\theta$ separately. Similarly, $U_{i,t} = \hat y_{i,t} + r_{i,t}$ and $U_{\theta,t} = \hat\theta_t + r_{\theta,t}$ are respectively the upper confidence bound of $y_i$ and $\theta$.

\subsection{Theoretical Results} % of Algorithm 1}

\newtheorem*{thmm1}{Theorem 1 (\textnormal{Correctness})}

If $\mathcal A$ is satisfied, then at any round $t$, the arms contained in $N$ are normal arms, and the arms contained in $O$ are outliers. The algorithm terminates when all the arms have been assigned to either the normal arm set $N$ or the outlier set $O$. Therefore, the arm set returned by Algorithm 1 is the correct outlier set with probability at least $1-\delta$.

\begin{thmm1}
For any $\delta>0$, the algorithm returns the correct outlier set $O^*$ with probability at least $1-\delta$.
\end{thmm1}
\iffalse
\begin{proof}
This claim is presented in Appendix C.
\end{proof}
\fi

%%% then with probability at least $1-\delta$, 

\iffalse
\begin{proof}
From Lemma \ref{A_Occur}, $\mathcal A$ is satisfied with probability at least $1-\delta$. If $\mathcal A$ is satisfied, then at any round $t$, for any arm $i\in O$ we have, $\theta\le\hat\theta_t+r_{\theta,t}\le\hat y_{i,t}-r_{i,t}\le y_i$. Thus, all arms contained in $O$ are outliers. Besides, at any round $t$, for any arm $i\in N$ we have, $y_i\le\hat y_{i,t}+r_{i,t}\le\hat\theta_t-r_{\theta,t}\le \theta$. Thus, all arms contained in $N$ are normal arms. Consequently, the arm set returned by Algorithm 1 is the correct outlier set with probability at least $1-\delta$.
%algorithm 1 returns the correct outlier set with probability at least $1-\delta$.
\end{proof}
\fi
%\fi

%The correctness of Algorithm 1 could be described as:
%\begin{lem}
%With probability at least $1-\delta$, 
%\end{lem}

%\noindent We will use the following notations in the remaining part: $\Delta_{i,\theta} = y_i - \theta$, $\Delta_{\theta, i} = \theta - y_i$, $\Delta_i =|y_i-\theta|$.

%For each arm $i\in [n]$, we define the gap between its expected reward and the threshold as $\Delta_i = |y_i - \theta|$.

For each arm $i\in [n]$, we define the gap between its expected reward and the threshold as $\Delta_i = |y_i - \theta|$, and let $\Delta_{\min} = \min_{i\in[n]} \Delta_i$ be the minimum gap among all the arms. The following theorem illustrates a problem-dependent sample complexity bound of Algorithm 1.

\newtheorem*{thmm2}{Theorem 2 (\textnormal{Sample Complexity})}

%The following theorem illustrates the sampling complexity of our proposed algorithm.
%For each arm $i\in [n]$, we define the gap between its expected reward and the threshold as $\Delta_i = |y_i - \theta|$, and $\Delta_{\min} = \min_{i\in[n]} \Delta_i$ is the minimal gap among all the $n$ arms. 
%We denote by $\Delta_{\min} = \min_{i\in[n]} \Delta_i$ is the minimal gap among all the $n$ arms. 
%where $\Delta_i = |y_i - \theta|$ illustrating the distance between the reward of arm $i$ and the threshold $\theta$, and $\Delta_{\min} = \min_{i\in[n]} \Delta_i$.(check this notation!)
\begin{thmm2}
With probability at least $1-\delta$, the total number of samples of Algorithm 1 could be bounded by
\iffalse
\begin{align}
&O\left(\sum_{i=1}^n (\displaystyle\frac{1}{\Delta_i^\theta})^2\log(\displaystyle\frac{n}{\delta}\log\displaystyle\frac{1}{\Delta_i^\theta})+\right).
\end{align}
\fi
\begin{align}
&O\Bigg(\sum_{i=1}^n \displaystyle\frac{1}{\Delta_i^2}\log\Big(\sqrt{\displaystyle\frac{n}{\delta}}\displaystyle\frac{1}{\Delta_i^2}\max\{1,(\displaystyle\frac{k}{\sigma_y})^2\}\Big) \nonumber\\
&+ \max\{1,(\displaystyle\frac{k}{\sigma_y})^2\}\displaystyle\frac{1}{\Delta_{\min}^2}\log\Big(\sqrt{\displaystyle\frac{n}{\delta}}\displaystyle\frac{1}{\Delta_{\min}^2}\max\{1,(\displaystyle\frac{k}{\sigma_y})^2\}\Big)\Bigg)\nonumber.
\end{align}
\end{thmm2}

\begin{figure*}[h]
\centering
%\subfigure[$\#$ Samples V.S. $n$]{
\subfigure[$\#$ Samples vs. $\#$ Arms]{
\centering
%0.935\columnwidth
%\includegraphics[width = 0.99\columnwidth]
\includegraphics[width = .85\columnwidth]{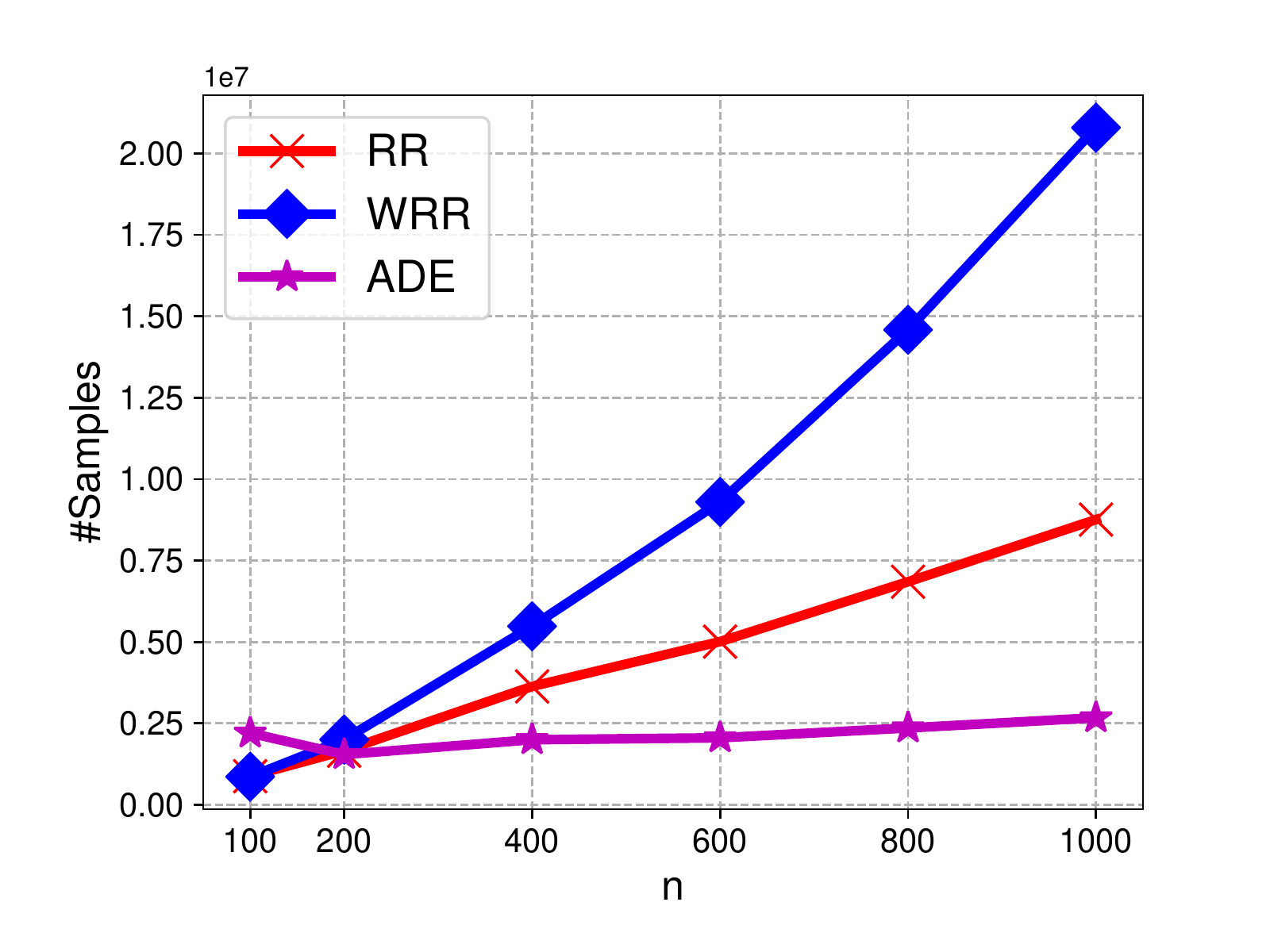}%iternum_vs_n.pdf} %0.9
\label{subfig2:The performance of distinct algorithms on Synthetic dataset}
}
\subfigure[$\#$ Samples vs. $\Delta_{\min}$]{
\centering
%\includegraphics[scale = 0.33]{Average_Expected_Regret_With_Distinct_Alpha.png} %0.9
%0.935\columnwidth
%\includegraphics[width = 6.75cm]
%\includegraphics[width = 0.99\columnwidth]
\includegraphics[width = .85\columnwidth]{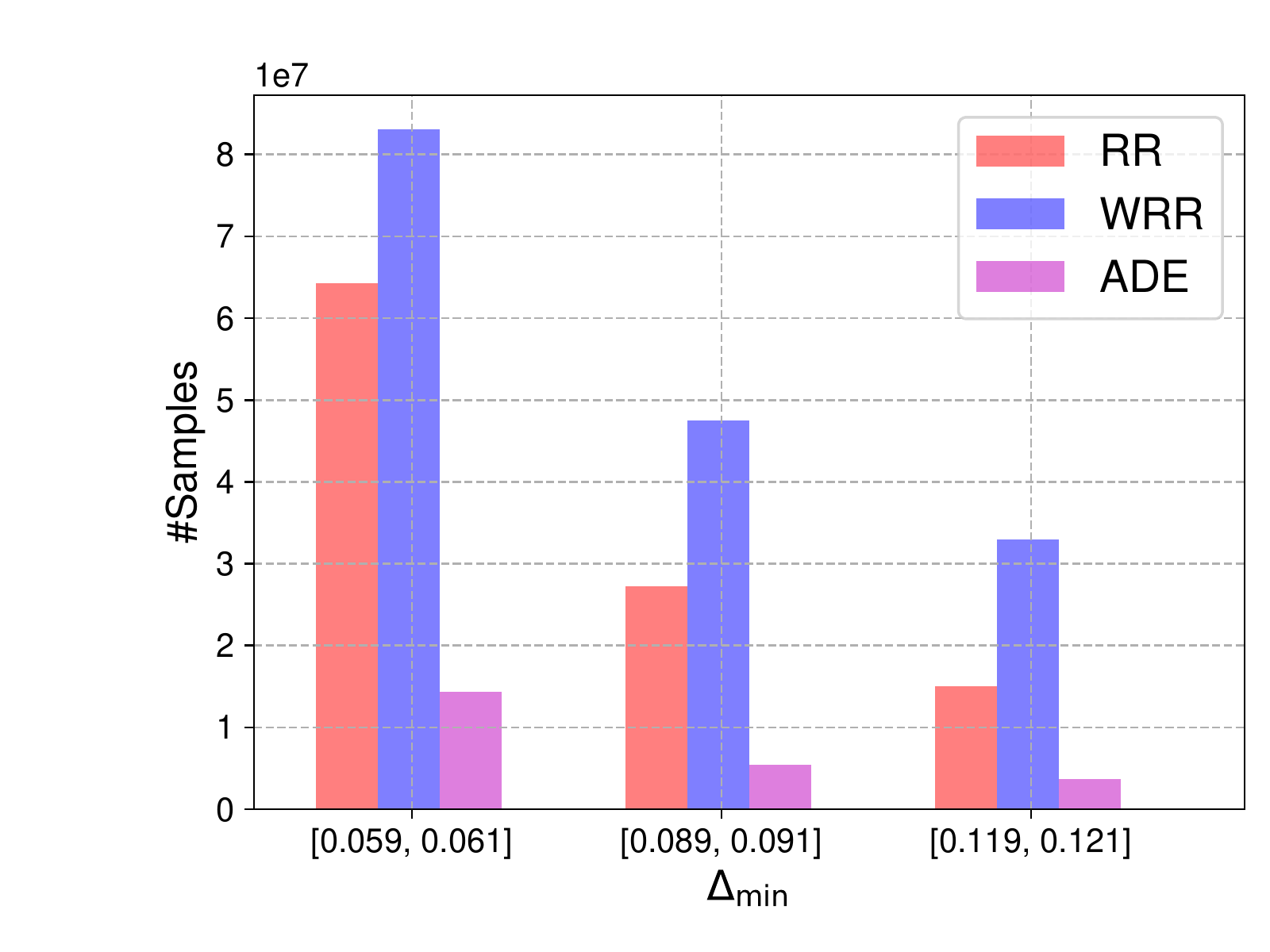}%iternum_vs_Delta_logbar.pdf} %0.9
\label{subfig1:The performance of distinct algorithms on Synthetic dataset}
}
\caption{The performance of distinct algorithms on synthetic datasets}
\label{fig:The performance of distinct algorithms on Synthetic dataset}
\end{figure*}

\paragraph{Comparison with RR and WRR:} 
\cite{zhuang2017identifying} proposed two algorithms named RR and WRR. RR samples each arm evenly in an iterative manner, while WRR allocates more samples to the unknown arms. The prior work used sequential samples to estimate the threshold, while we use random samples to estimate the threshold. Sequential sampling for the threshold results in an $h(m)$ factor in the denominator of the confidence radius, where $h(m)$ represents the harmonic mean. This implies that the confidence radius for the threshold scales with the number of arms, which further leads to the $n/\Delta_{\min}^2$ term in the upper bound of their sample complexity. Viewing from the threshold's definition, the random sampling approach provides a more natural way for constructing an estimator for the threshold. We perform random sampling for the threshold and construct an adaptive confidence radius which is independent of the number of arms (if we ignore logarithmic factors).

\section{Experiments}

%The sampling complexity of RR algorithm is larger than that of our algorithm by a factor of .... 
%\subsection{The experiment about the imporovement 2}
%From this experiment result, I think that use the samples that was intended to estimate $\theta$ to estimate the confidence interval of $i$ seems to be already enough.

%If we could illustrate the suitable application range of our algorithm, I think it would be rather good!!!

%We could also present some results when RR or WRR performs better that our algorithms.
\iffalse
\lipsum[1-2]
\begin{figure}
\subfloat[$\#$ Samples VS. $\#$ Arms]{
  \includegraphics{n.pdf}\label{subfig2:The performance of distinct algorithms on Synthetic dataset}
}\\
\subfloat[$\#$ Samples VS. $\Delta_{\min}$]{
  \includegraphics{Min_Delta.pdf}\label{subfig1:The performance of distinct algorithms on Synthetic dataset}
}
\caption{The performance of distinct algorithms on synthetic datasets}
\label{fig:The performance of distinct algorithms on Synthetic dataset}
\end{figure}
\lipsum
\fi

\begin{figure*}[h]
\subfigure[The number of samples, $k=2$]{
\centering
\includegraphics[width = .67\columnwidth]{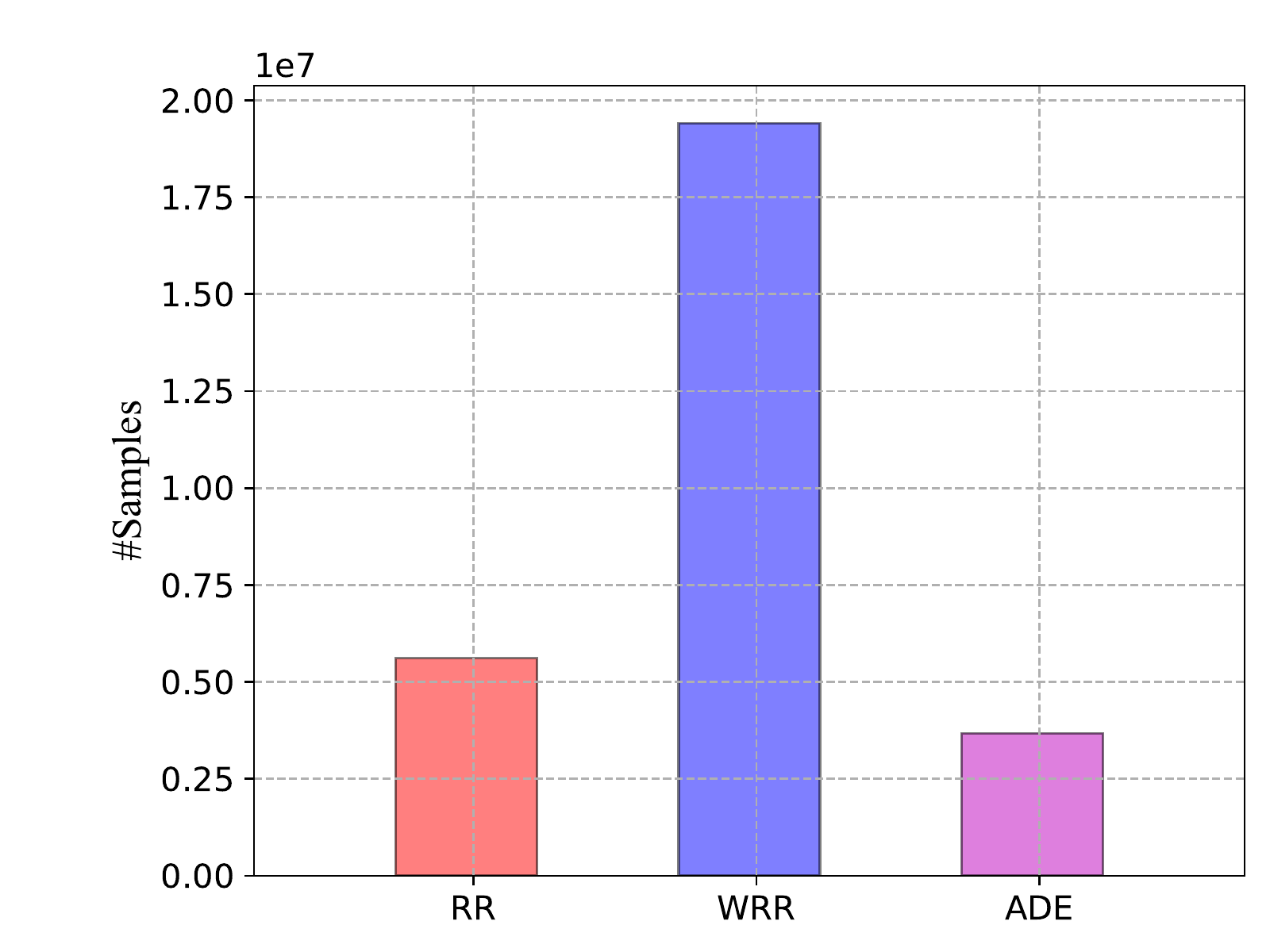}%{HyunCatherines_k.pdf} %0.9
\label{HyunCatherines a}
}
\subfigure[The number of samples, $k=3$]{
\centering
\includegraphics[width = .67\columnwidth]{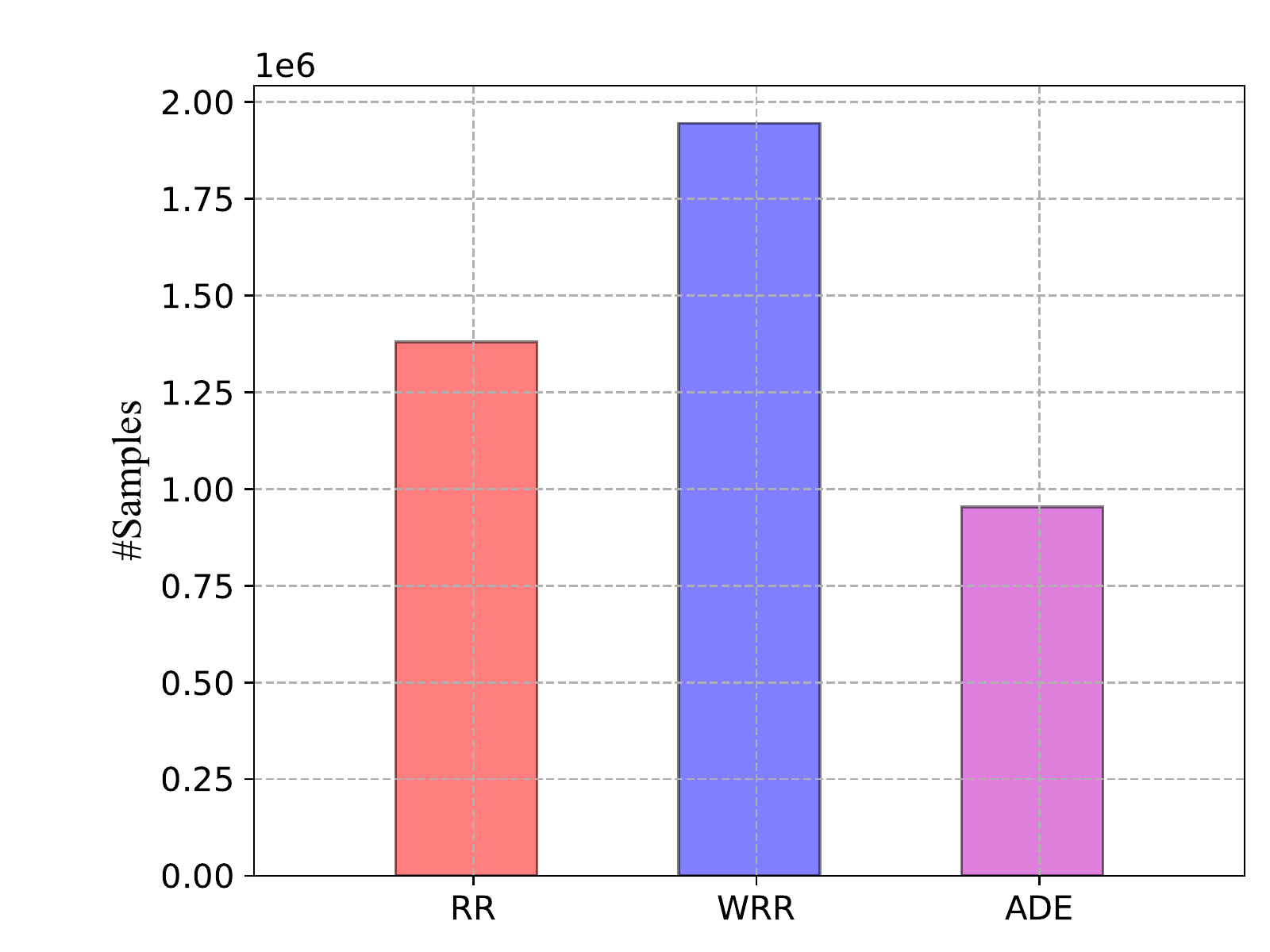}%{k3_scattered.pdf} %0.9
\label{HyunCatherines b}
}
\centering
\subfigure[The distribution of $\Delta_i$]{
\centering
\includegraphics[width = .67\columnwidth]{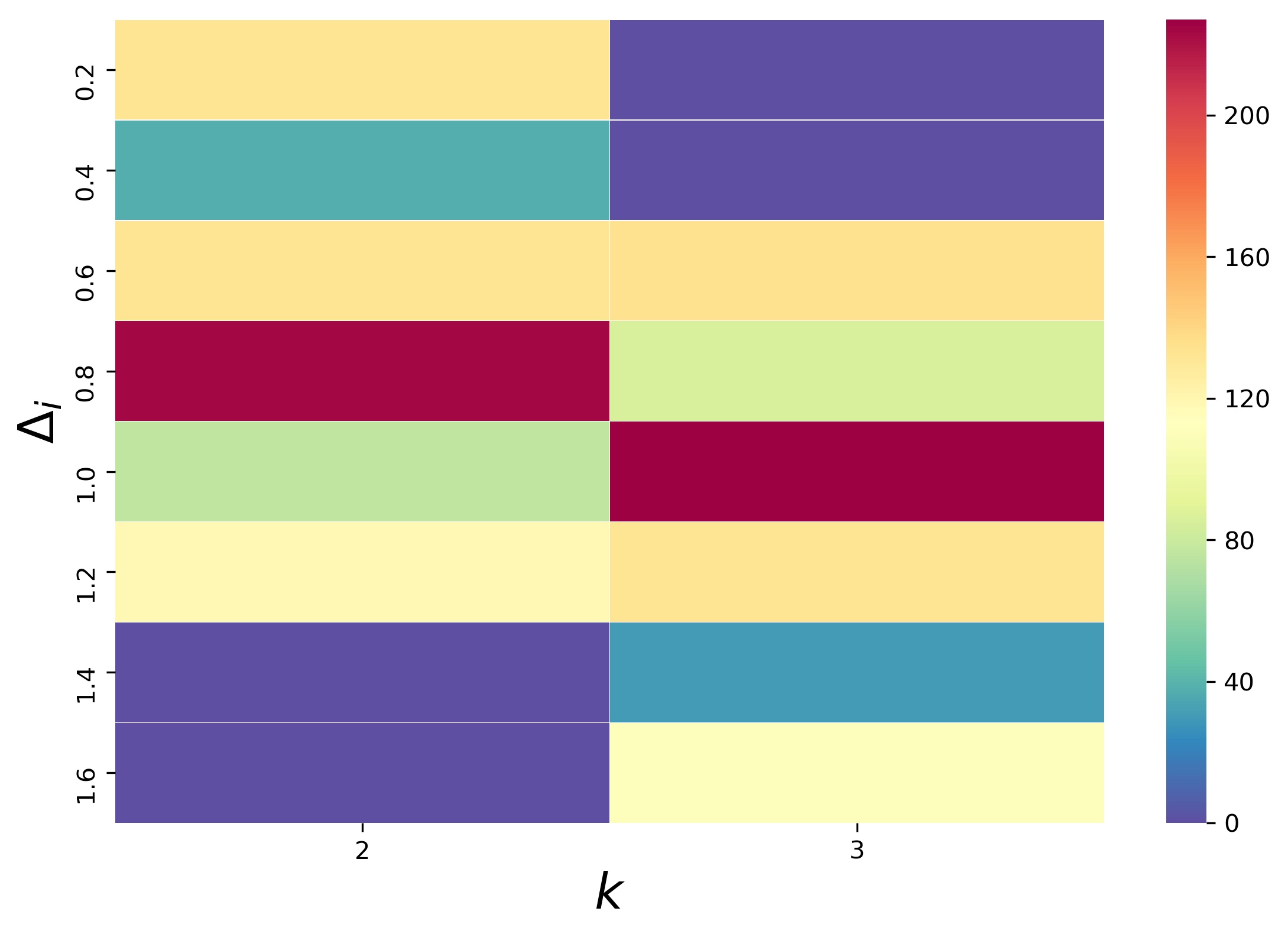} %0.9
\label{HyunCatherines c}
}
\caption{The performance of distinct algorithms on HyunCatherines dataset}
\label{fig:The performance of distinct algorithms on HyunCatherines dataset}
\end{figure*}

In this section, we conduct experiments on both synthetic datasets and real datasets to show the performance of distinct algorithms. The confidence parameter $\delta$ is set as $0.1$ and the results are averaged across ten independent simulations. We compare the performance of our proposed \textup{ADE} algorithm with the state-of-the-art algorithms \textup{RR} and \textup{WRR} proposed by \citeauthor{zhuang2017identifying} \shortcite{zhuang2017identifying}.

%In the first experiment, 

%Besides, we also compare with an algorithm which is designed based on the classical accept-reject approach \cite{bubeck2013multiple}.%proposed by \citet{zhuang2017identifying}
%Besides the state-of-the-art algorithms RR and WRR proposed by \cite{}, we also compare with the following three algorithms:
\begin{itemize}
%\item\textbf{Random}: This algorithm selects a set of $K$ nodes randomly from the $n$ nodes in the graph.

\item\textbf{Round-Robin (\textup{RR})}: \textup{RR} samples the arms in a round-robin way. Specifically, each arm is sampled in turn and in the following circular order, $1,2,\dots,n,1,2,\dots$. Thereby, each arm is sampled with identical frequency.
%This algorithm pulls each arm iteratively from arm $1$ to arm $n$, and 

%"Pull each arm iteratively from 1 to n, Stops when no arms have overlapping confidence intervals with the threshold confidence interval"  "and stops when the confidence interval of the threshold no longer overlaps with any arms? confidence intervals"
\item\textbf{Weighted Round-Robin (\textup{WRR})}: \textup{WRR} samples the arms in a weighted round-robin way. The arms that are not yet determined are sampled more frequently than the arms that have been determined. 

\iffalse
\item\textbf{Round-Robin with New Radius}: \textup{REG} The only difference between this algorithm and RR algorithm lies in that, the confidence radius for theta is derived using a simpler approach.
\fi

%The stop condition is identical with that of RR algorithm.

%"Iterate each arm from 1 to n, Pull ??>1 times for arms with overlapping confidence interval with the threshold confidence interval"
%"We propose a weighted round robin algorithm. It visits arms in a round-robin fashion, but pulls more if an arms? confidence interval overlaps with the threshold?s confidence interval."
%IMLinUCB is also a UCB-based algorithm, which utilizes the contextual information and makes the linear generalization assumption, proposed by .
%\item\textbf{Round by Round 1}: At each round $t$, we sample for each arm $i$ a total of $N_{i,t} = $ times, ensuring that $r_i = 2^{-t}$. Besides, we sample the same number of times for $\theta$, i.e. $N_{\theta,t} = N_{i,t}$.(Perhaps we could use this code: improvedwithmumed2) 
%\item\textbf{Round by Round 2}: we set $r_i = 2^{-t}$, and use $\tilde r_\theta = r_i (c_1/R)^{2/3}$ to determine the number of samples for $\theta$ as $N_\theta = ....$  (Code: improvedwithmumed2)

%determined could be directly removed.
%The above three baselines are used to illustrate the importance of balancing well between sampling for the threshold $\theta$ and sampling for the arms.%(Code: improvedwithmumed2) %as $N_\theta = ....$
\end{itemize} 
%The above three baselines are used to illustrate the importance of balancing well between sampling for the threshold $\theta$ and sampling for the arms.
Since \textup{RR} and \textup{WRR} might take a long period of time to terminate, we speed up these two algorithms by pulling an arm $1,000$ times at each round instead of pulling an arm once at each round. The termination conditions of all these algorithms are identical. Specifically, each algorithm terminates when there exists no overlapping between the confidence interval of the threshold and that of any arm.

\subsection{Experiments on the Synthetic Datasets}
%\textbf{Synthetic Datasets:} 
We construct the synthetic datasets with distinct settings of $n$ and $\Delta_{\min}$. For each setting, we generate $10$ test cases independently. The expected reward of each arm is generated uniformly at random in $[0,1]$. The sampled reward corresponding to each pull is generated independently from the Bernoulli distribution. Accordingly, $R$ is set as $1$.

We compare the average number of samples used by distinct algorithms with regard to a distinct number of arms. Specifically, we vary $n$ as $\{100, 200, 400, 600, 800, 1000\}$, $k$ is set as $2.5$, the range of $\Delta_{\min}$ is $[0.1,0.2]$. We illustrate the experimental results in Figure \ref{subfig2:The performance of distinct algorithms on Synthetic dataset}, based on which we have the following observations. (1) When $n$ increases, the average number of samples used by each algorithm also tends to increase, which is consistent with the theoretical results. (2) When $n$ is as small as $100$, the number of samples used by \textup{RR} and \textup{WRR} are smaller than \textup{ADE}, and \textup{WRR} performs slightly better than \textup{RR}. The constant embedded in our algorithm counteracts its advantage when the hardness of the identification problem is small. (3) For all the remaining settings of $n$, \textup{ADE} has the best performance while \textup{WRR} performs the worst, and the improvement of \textup{ADE} over \textup{RR} and \textup{WRR} tends to be more significant with an increased number of arms. Figure \ref{subfig1:The performance of distinct algorithms on Synthetic dataset} shows the performance of distinct algorithms on synthetic datasets with various settings of $\Delta_{\min}$. We vary the range of $\Delta_{\min}$ as $\{[0.059,0.061], [0.089,0.091], [0.119,0.121]\}$, $n$ is fixed as $900$, $k$ is set as $2$. The average number of samples required by each algorithm tends to decrease when $\Delta_{\min}$ becomes larger, which is a key component that captures the hardness of the problem. \textup{ADE} performs better than both \textup{RR} and \textup{WRR} in these datasets.

\subsection{Experiments on the Real Datasets}
%\subsubsection{Description of the Dataset}

We characterize the performance of distinct algorithms on the real dataset HyunCatherines which is available at \url{http://ir.ischool.utexas.edu/square/data.html}. $722$ workers are extracted from the HyunCatherines dataset. The workers with extremely high error rates are regarded as outliers, and should be excluded from the crowd in order to obtain labels with high quality. Each worker could be regarded as an arm, and the error rate of this worker is the expected reward corresponding to this worker. Figure \ref{HyunCatherines a} and \ref{HyunCatherines b} illustrate the average number of samples each algorithm uses when $k$ is set as $2$ and $3$ respectively. It could be viewed that \textup{ADE} achieves considerable improvement over the state-of-the-art algorithms. Note that when $k$ is increased from $2$ to $3$, the number of samples required for these three algorithms all decreases. Although $k$ exists in the upper bound of the sample complexity of all these algorithms, the hardness of the problem does not necessarily increase with $k$. It is worth noting that the distribution of $\Delta_i$ may vary as $k$ changes. Figure \ref{HyunCatherines c} illustrates the distribution of $\Delta_i$ with distinct settings of $k$. For both datasets, the minimum gap between the arms and the threshold as well as the gaps of the bulk of arms tend to be larger when $k$ is increased from $2$ to $3$. This comparison explains the smaller usage of samples when $k$ is larger.

\section{Conclusion}
%\indent One paragraph
% a variant of the thresholding bandit problem in the context of outlier detection
We studied the problem of outlier identification in the framework of the multi-armed bandit. The learner is asked to identify the arms with rewards above a threshold, which is a function of the rewards of all the arms. We proposed two distinct sampling strategies to address this \textit{double-exploration} dilemma, and constructed an adaptively adjusted confidence radius for the threshold which is independent of the number of arms. We put forward an algorithm that automatically balances between distinct sampling strategies. Theoretical analyses and experimental results illustrate the efficiency of our algorithm. An interesting direction for future work is to address this problem in the fixed budget setting. Another interesting question is whether it is possible to construct a better estimator for the threshold with a tighter confidence interval. Besides, it remains open to derive a lower bound for this problem.
%, and further propose a better trade-off approach
%It is also interesting to derive a lower bound for this problem.
%for exploring the arms and exploring the threshold
%Theoretical analyses and experimental results illustrate the efficiency of our algorithm. 

%The learner is asked to identify the arms with rewards above certain threshold which is also unknown. This poses a \textit{double-exploration} dilemma. The threshold is a function of the standard deviation of the rewards of all the arms, making it harder to construct a
%and balance well between exploration for the rewards of the arms and exploration for the threshold, we provide the ADE algorithm
 
%We focus on the fixed confidence setting, 
% we envision two future directions towards which our work can evolve. On
%is to extend the problem to the PAC setting
%For the future work, perhaps we could extend the deteministic outlier detection problem to a corresponding PAC setting. 
%In the future, perhaps we could move this setting to the PAC setting, and allow that ... percent arms could not be determined. How about allowing for certain error ratio?
%Specifially, this could be exended to the problem of selecting the arms satisfying that $y_i\ge\theta -\epsilon$. Besides, it is also meaningful to extend to the problem that allows for at most $K$ arms undetermined.
%The experimental results show an 

%\documentclass{article}

\section*{Acknowledgements}
Xiaojin Zhang would like to thank Qiman Shao for helpful discussions.

\bibliographystyle{aaai}
\bibliography{references}

%\clearpage

%\newpage
%\iffalse
\begin{appendix}
\newtheorem{thm}{Theorem}[subsection]
\newtheorem{lem}[thm]{Lemma}
%\counterwithin{thm}{section}
%\newtheorem{appenlem}{Lemma}
\newtheorem *{appenlem2}{Lemma 1}
\newtheorem *{appenlem3}{Lemma 2}
\newtheorem *{appenlem4}{Lemma 3}
\newtheorem *{appenlem5}{Lemma 4}
\newtheorem *{appenlem6}{Lemma 5}
\newtheorem *{appenlem7}{Lemma 6}
\newtheorem *{appenlem8}{Lemma 7}
\newtheorem{thm2}{Theorem}

%Without loss of generality, we assume that the rewards are bounded in $[a,b]$, and $a\ge 0$. Denote by $R = b-a$, and by $R' = b^2-a^2$. %(Newly Added)

\section{\leftline{Proof of Confidence Intervals}}

\noindent The confidence interval of $\mu_{y}$ could be directly constructed using Hoeffding's inequality, which is shown in the following proposition.

\begin{proposition}
Suppose $X_1, X_2, \dots, X_n$ are independent variables, and they are bounded in $[a,b]$. Then we have that,
\begin{align}
\mathbb P(|\displaystyle\frac{1}{n}\sum_{i=1}^n X_i - \mathbb E[\displaystyle\frac{1}{n}\sum_{i=1}^n X_i]|\ge \epsilon)\le 2\exp(-2n\epsilon^2/R^2),\nonumber
\end{align}
where $R = b - a$.
\end{proposition}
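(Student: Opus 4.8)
The plan is to prove this by the standard exponential-moment (Chernoff) method, which reduces the statement to a moment generating function estimate for a single bounded random variable. First I would treat only the upper tail $\mathbb P(\frac{1}{n}\sum_{i=1}^n X_i - \mathbb E[\frac{1}{n}\sum_{i=1}^n X_i]\ge \epsilon)$; the lower tail follows by applying the same argument to the variables $-X_i$, which lie in $[-b,-a]$, an interval of the same length $R$, and the two-sided bound is then obtained from a union bound, which accounts for the factor $2$. Writing $S = \sum_{i=1}^n (X_i - \mathbb E[X_i])$, the upper-tail event is $\{S\ge n\epsilon\}$, and for every $s>0$ Markov's inequality applied to $e^{sS}$ gives $\mathbb P(S\ge n\epsilon)\le e^{-sn\epsilon}\,\mathbb E[e^{sS}]$. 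By independence, $\mathbb E[e^{sS}] = \prod_{i=1}^n \mathbb E[e^{s(X_i-\mathbb E[X_i])}]$, so it suffices to bound each factor.

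The key step is Hoeffding's lemma: if $Y$ is centered with $Y\in[a',b']$ almost surely, then $\mathbb E[e^{sY}]\le \exp(s^2(b'-a')^2/8)$. I would prove this by convexity of $x\mapsto e^{sx}$, which gives the pointwise bound $e^{sY}\le \frac{b'-Y}{b'-a'}e^{sa'} + \frac{Y-a'}{b'-a'}e^{sb'}$; taking expectations and using $\mathbb E[Y]=0$ yields $\mathbb E[e^{sY}]\le e^{\varphi(s)}$ where $\varphi(s) = \log\big(\frac{b'}{b'-a'}e^{sa'} - \frac{a'}{b'-a'}e^{sb'}\big)$. A direct computation shows $\varphi(0)=\varphi'(0)=0$ and $\varphi''(s)\le (b'-a')^2/4$ for all $s$ (since $\varphi''(s)$ equals the variance of a random variable supported on $\{a',b'\}$), so Taylor's theorem gives $\varphi(s)\le s^2(b'-a')^2/8$. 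Applying this with $Y = X_i-\mathbb E[X_i]$ and $b'-a'\le R$ yields $\mathbb E[e^{s(X_i-\mathbb E[X_i])}]\le e^{s^2R^2/8}$ for each $i$.

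Combining the pieces, $\mathbb P(S\ge n\epsilon)\le \exp(-sn\epsilon + ns^2R^2/8)$ for every $s>0$, and minimizing the exponent over $s$ (the optimal choice being $s = 4\epsilon/R^2$) gives $\mathbb P(S\ge n\epsilon)\le \exp(-2n\epsilon^2/R^2)$. Running the identical argument on $-X_i$ and adding the two one-sided bounds produces the claimed two-sided inequality. The only genuinely delicate point is the estimate $\varphi''(s)\le (b'-a')^2/4$ inside Hoeffding's lemma; the rest is routine bookkeeping, and indeed one could alternatively just invoke Hoeffding's original inequality, since the statement is entirely standard.
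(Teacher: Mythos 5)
Your proof is correct and is exactly the standard Chernoff-plus-Hoeffding's-lemma argument; the paper itself states this proposition without proof, simply invoking Hoeffding's inequality as a known result, so your derivation fills in precisely the classical argument the paper is relying on. No gaps: the convexity bound, the second-derivative estimate $\varphi''(s)\le (b'-a')^2/4$, the optimal choice $s=4\epsilon/R^2$, and the union bound over the two tails are all handled correctly.
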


%\noindent Conditioned on $m_{\theta,t}=m$, the confidence interval of $\mu_y$ is illustrated in the following lemma.

%\section{\leftline{Proof of Lemma 2}}

\noindent Conditioned on $m_{\theta,t}=m$, the confidence interval of $\mu_y$ is illustrated in the following lemma.
\begin{appenlem2}\label{lem_mu}
With probability at least $1-2\delta_t$, we have $|\hat\mu_{y,t}-\mu_{y}|\le r_{\mu,t}$, where $r_{\mu,t} = R\sqrt{\displaystyle\frac{\log(1/\delta_t)}{2m}}$, and $\delta_t = \displaystyle\frac{3\delta}{(n+4)\pi^2 t^2}$.
\end{appenlem2}
\begin{proof}
Firstly we show that $\hat\mu_{y,t}$ is an unbiased estimator of $\mu_y$. That is, $\mathbb E[\hat\mu_{y,t}] = \mu_{y}$. Since $x_{l,1}$ is the sampled reward of an arm which is uniformly sampled from the arm set $\{1,2,\dots,n\}$, we have
\begin{align}
&\mathbb E[x_{l,1}] = \mathbb E[\mathbb E[x_{l,1}|y]] = \displaystyle\frac{1}{n}\sum_{i=1}^{n} y_{i} = \mu_{y}.
\end{align}

\noindent Therefore, 
\begin{align}
&\mathbb E[\hat\mu_{y,t}] = \mathbb E[\displaystyle\frac{1}{m}\sum_{l=1}^{m} x_{l,1}]  = \displaystyle\frac{1}{m} m\mu_{y} = \mu_{y}.
\end{align}

\noindent The estimator of $\mu_y$ at round $t$ is constructed using $x_{l,1} (1\le l \le m)$, which are independent and bounded random variables. Therefore, the confidence interval for $\mu_y$ could be directly constructed using Hoeffding's inequality. Specifically, we have that
\begin{align}
\mathbb P(&|\hat\mu_{y,t}-\mu_{y}|\ge r_{\mu,t})\nonumber\\
&=\mathbb P(|\displaystyle\frac{1}{m}\sum_{l=1}^{m} x_{l,1}-\mathbb E[\displaystyle\frac{1}{m}\sum_{l=1}^{m} x_{l,1}]|\ge r_{\mu,t})\nonumber\\
&\le 2\exp(\displaystyle\frac{-2m r_{\mu,t}^{2}}{R^{2}}).\nonumber\\
\end{align}
Since $r_{\mu,t} = \sqrt{\displaystyle\frac{R^{2}\log(1/\delta_t)}{2m}}$, we have
\begin{align}
&\mathbb P(|\hat\mu_{y,t}-\mu_{y}|\ge r_{\mu, t})\le 2\delta_t.
\end{align}
\end{proof}

\begin{appenlem4}
With probability at least $1-\delta_{t}$, we have $|\hat\sigma_{y,t} - \sigma_y|\le \sqrt{\displaystyle\frac{2}{U_{\sigma,t}}}\epsilon_{\sigma, t}$, where $U_{\sigma,t} = \min_{1\le\tau\le t}\{\hat\sigma_{y,\tau}^2+\epsilon_{\sigma,\tau}\}$, and $\epsilon_{\sigma,t} = (R'+2bR)\sqrt{\displaystyle\frac{\log(6/\delta_{t})}{2m}}$. 
%We also have $|\hat\sigma_y - \sigma_y|\le \displaystyle\frac{\sqrt{2}\epsilon}{\sqrt{\sigma_y^2+\epsilon}}$ with probability at least $1-\delta$.
\end{appenlem4}
\begin{proof}
From Lemma 2, we have that%\ref{sigma_2_append}
\begin{align}
|\hat\sigma_{y,t}^2-\sigma_{y}^{2}|\le\epsilon_{\sigma, t}.
\end{align}

%The defi of $\tilde\sigma_y$?

\noindent If $\hat\sigma_{y,t}^2\ge\epsilon_{\sigma,t}$, then
\begin{align}
&|\hat\sigma_{y,t} -\sigma_y|\le\displaystyle\frac{\epsilon_{\sigma, t}}{\sqrt{\hat\sigma_{y,t}^2-\epsilon_{\sigma, t}}+\sqrt{\hat\sigma_{y,t}^2}}\label{case1}.
\end{align}

\noindent If $\epsilon_{\sigma,t}/3\le\hat\sigma_{y,t}^2\le\epsilon_{\sigma, t}$, then
\begin{align}
&|\hat\sigma_{y,t} - \sigma_y|\le \hat\sigma_{y,t}\label{case2}.
\end{align}

\noindent If $\hat\sigma_{y,t}^2\le\epsilon_{\sigma, t}/3$, then
\begin{align}
&|\hat\sigma_{y,t} - \sigma_y|\le \displaystyle\frac{\epsilon_{\sigma, t}}{\sqrt{\hat\sigma_{y,t}^2+\epsilon_{\sigma, t}}+\sqrt{\hat\sigma_{y,t}^2}}\label{case3}.
\end{align}

\noindent Ineq.(\ref{case1}) is due to $\sigma_{y}\ge \sqrt{\hat\sigma_{y,t}^2-\epsilon_{\sigma,t}}$ when $\hat\sigma_{y,t}^2\ge\epsilon_{\sigma,t}$, Ineq.(\ref{case2}) and Ineq.(\ref{case3}) are due to $|\hat\sigma_{y,t} - \sigma_y|\le \max\{\hat\sigma_{y,t}, \displaystyle\frac{\epsilon_{\sigma, t}}{\sqrt{\hat\sigma_{y,t}^2+\epsilon_{\sigma, t}}+\sqrt{\hat\sigma_{y,t}^2}}\}$, and $\hat\sigma_{y,t}\ge\displaystyle\frac{\epsilon_{\sigma, t}}{\sqrt{\hat\sigma_{y,t}^2+\epsilon_{\sigma, t}}+\sqrt{\hat\sigma_{y,t}^2}}$ when $\hat\sigma_{y,t}^2\ge\epsilon_{\sigma,t}/3$, $\hat\sigma_{y,t}\le\displaystyle\frac{\epsilon_{\sigma, t}}{\sqrt{\hat\sigma_{y,t}^2+\epsilon_{\sigma, t}}+\sqrt{\hat\sigma_{y,t}^2}}$ when $\hat\sigma_{y,t}^2\le\epsilon_{\sigma,t}/3$.

\noindent Let $U_{\sigma,t} = \min_{1\le\tau\le t}\{\hat\sigma_{y,\tau}^2+\epsilon_{\sigma,\tau}\}$. We have that

%, and $U_{\sigma,0} = \hat\sigma_{y,0}^2+\epsilon_{\sigma,0}$.

%Denote by $U_{\sigma,t} = \min\{\sqrt{\hat\sigma_{y,t}^2+\epsilon_{\sigma,t}}, U_{\sigma,t-1}\}$, and $U_{\sigma,0} = \sqrt{\hat\sigma_{y,0}^2+\epsilon_{\sigma,0}}$.

\begin{align}
|\hat\sigma_{y,t} - \sigma_y|&\le \displaystyle\frac{\sqrt{2}\epsilon_{\sigma, t}}{\sqrt{\hat\sigma_{y,t}^2+\epsilon_{\sigma, t}}}\nonumber\\
&\le\sqrt{\displaystyle\frac{2}{U_{\sigma,t}}}\epsilon_{\sigma, t}.
\end{align}
\end{proof}

\noindent Now we could derive the confidence interval of $\theta$ with the confidence interval of $\mu_{y}$ and $\sigma_{y}$.

%\noindent Conditioned on $m_{\theta,t}=m$, the confidence interval of $\theta$ is illustrated in the following lemma. 
%Now we could derive the confidence bound of $\theta$ with the confidence bound of $\mu_{y}$ and $\sigma_{y}$.

\begin{appenlem5}%\label{theta_confidence_appen}
Conditioned on $m_{\theta, t} = m$, with probability at least $1-8\delta_{t}$, we have $|\hat\theta_{t} - \theta|\le r_{\theta,t}$, where $r_{\theta,t} = \Big(R+\sqrt{2}k\displaystyle\frac{(R'+2bR)}{\sqrt{U_{\sigma,t}}}\Big)\sqrt{\displaystyle\frac{\log(1/\delta_{t})}{2m}}$, $U_{\sigma,t} = \min_{1\le\tau\le t}\{\hat\sigma_{y,\tau}^2+\epsilon_{\sigma,\tau}\}$, and $\epsilon_{\sigma,t} = (R'+2bR)\sqrt{\displaystyle\frac{\log(1/\delta_{t})}{2 m}}$. 
\end{appenlem5}
%, and $\epsilon_{\sigma,t} = (R'+2bR)\sqrt{\displaystyle\frac{\log(1/\delta_t)}{2m_{\theta,t}}}$.

\iffalse
\begin{lem}\label{theta_confidence}
Denote $\hat\theta = \hat\mu_{y}+k\hat\sigma_{y}$, where $\hat\mu_{y} = \displaystyle\frac{1}{m}\sum_{l=1}^{m} x_{l,1}$, $\hat\sigma_{y} = \sqrt{|V|}$, $V = \displaystyle\frac{1}{m}\sum_{l=1}^{m} x_{l,1}x_{l,2} - \displaystyle\frac{1}{m^2}\sum_{l=1}^{m}\sum_{h=1}^{m} x_{l,1} x_{h,2}$. For any $\delta>0$, we have $|\hat\theta - \theta|\le r_\theta$ with probability at least $1-\delta$, where $r_\theta = R\sqrt{\displaystyle\frac{\log(8/\delta)}{2m_{\theta}}}+k \displaystyle\frac{\sqrt{2}\epsilon}{\sqrt{\hat\sigma_y^2+\epsilon}}$, and $\epsilon = (R'+2bR)\sqrt{\displaystyle\frac{\log(8/\delta)}{2m_{\theta,t}}}$. %We also have $|\hat\theta - \theta|\le r_\theta'$ with probability at least $1-\delta$, where $r_\theta' = R\sqrt{\displaystyle\frac{\log(8/\delta)}{2T}}+k \displaystyle\frac{\sqrt{2}\epsilon}{\sqrt{\sigma_y^2+\epsilon}}$.
\end{lem}
\fi

\begin{proof}
\iffalse
\begin{align}
\mathbb P(|\hat\theta - \mathbb E[\hat\theta]|\ge\epsilon) &= \mathbb P(|\hat\mu_{y}+k\hat\sigma_{y}-\mu_{y}-k\sigma_{y}|\ge\epsilon)
\end{align}
\fi

Denote $r_{\mu,t}=R\sqrt{\displaystyle\frac{\log(1/\delta_{t})}{2m}}$, $r_{\sigma,t} = \sqrt{\displaystyle\frac{2}{U_{\sigma,t}}}\epsilon_{\sigma, t}$, where $U_{\sigma,t} = \min\{\hat\sigma_{y,t}^2+\epsilon_{\sigma,t}, U_{\sigma,t-1}\}$, and $\epsilon_{\sigma,t} = (R'+2bR)\sqrt{\displaystyle\frac{\log(1/\delta_{t})}{2 m}}$. %From Lemma \ref{lem_mu} and Lemma \ref{lem_sigma}, 
From Lemma 1 and Lemma 3, we have
\begin{align}
\mathbb P(|\mu_{y}-\hat\mu_{y,t}|\ge r_{\mu,t})\le 2\delta_{t},
\end{align}
and
\begin{align}
\mathbb P(k|\sigma_{y}-\hat\sigma_{y,t}|\ge k r_{\sigma,t})\le 6\delta_{t}.
\end{align}
\iffalse
\noindent and
\begin{align}
\mathbb P(k|\sigma_{y}-\hat\sigma_{y}|\ge k r_\sigma')\le 6\delta_t.
\end{align}
\fi

%$r_\theta' = r_\mu+k r_\sigma'
\noindent If we chose $r_{\theta,t} = r_{\mu,t}+k r_{\sigma,t}$. Then we have
\begin{align}
\mathbb P(|\hat\theta_t - \theta|\ge r_{\theta,t}) =& \mathbb P(|\hat\mu_{y,t}+k\hat\sigma_{y,t}-\mu_{y}-k\sigma_{y}|\ge r_{\theta,t})\nonumber\\
\le& \mathbb P(|\hat\mu_{y,t}-\mu_{y}|\ge r_{\mu,t})\nonumber\\
&+\mathbb P(|k\hat\sigma_{y,t} - k\sigma_{y}|\ge k r_{\sigma,t})\nonumber\\
\le& 8\delta_{t}.
\end{align}
%chose $\delta_{1} = \displaystyle\frac{1}{4}\delta$, and $\delta_{2} = \displaystyle\frac{3}{4}\delta$, 

\iffalse
\noindent We also have
\begin{align}
&\mathbb P(|\hat\theta - \mathbb E[\hat\theta]|\ge r_\theta') \nonumber\\
&= \mathbb P(|\hat\mu_{y}+k\hat\sigma_{y}-\mu_{y}-k\sigma_{y}|\ge r_\theta')\nonumber\\
&\le \mathbb P(|\hat\mu_{y}-\mu_{y}|\ge r_\mu)+\mathbb P(|k\hat\sigma_{y} - k\sigma_{y}|\ge k r_\sigma')\nonumber\\
&\le\delta.
\end{align}
\fi

\iffalse
\noindent Thus,
\begin{align}
&\mathbb P(|\hat\theta - \mathbb E[\hat\theta]|\ge\epsilon)\le\delta,
\end{align}

\noindent where $\epsilon = $.
\fi
\end{proof}

\begin{appenlem6}%\label{A_Occur}
Define random event $\mathcal A = \{|y_i-\hat y_{i}|\le r_{i,t},|\theta-\hat\theta|\le r_{\theta,t}, \forall i, \forall m_a, \forall m_\theta\}$ ($t=m_a+m_\theta$), then we have event $\mathcal A$ occurs with probability at least $1-\delta$.
\end{appenlem6}

\begin{proof}
\iffalse
\noindent Denote $r_{i,t} = R\sqrt{\displaystyle\frac{\log((n+4)\pi^2 t^2/(3\delta))}{2m_{i,t}}}$, $r_{\theta,t} = R\sqrt{\displaystyle\frac{\log((n+4)\pi^2 t^2/(3\delta))}{2m_{\theta,t}}}+\sqrt{2}k \displaystyle\frac{\epsilon_{\sigma,t}}{\sqrt{\hat\sigma_{y,t}^2+\epsilon_{\sigma,t}}}$, where $\epsilon_{\sigma,t} = (R'+2bR)\sqrt{\displaystyle\frac{\log((n+4)\pi^2 t^2/(3\delta))}{2m_{\theta,t}}}$. 
\fi
% and the threshold separately. 

At round $t$, conditioned on $m_{i,t}=m_a$, for any arm $i\in [n]$, by Hoeffding's inequality, we have,
\begin{align}
\mathbb P(|y_i-\hat y_{i}|\ge r_{i,t})\le 2\exp(-2r_{i,t}^2m_a/R^2) = 2\delta_t,\nonumber
\end{align}
where the equality is due to $r_{i,t} = R\sqrt{\displaystyle\frac{\log(1/\delta_t)}{2m_a}}$.

\iffalse
\noindent Note that 
\begin{align}
    \mathbb P(|y_i-\hat y_{i}|\ge r_{i,t})\le\mathbb P(|y_i-\hat y_{i}|\ge r_{i,m_a})
\end{align}
\fi

\noindent Combined with Lemma \ref{theta_confidence}, we have,
\begin{align}
&\mathbb P(\lnot\mathcal A)\nonumber\\
\le&\sum_{m_a=1}^{\infty}\sum_{m_{\theta}=1}^{\infty}\bigg[\sum_{i\in S_0}\mathbb P(|y_i-\hat y_{i}|\ge r_{i,(m_a+m_{\theta})})\nonumber\\&+\mathbb P(|\theta-\hat\theta|\ge r_{\theta,(m_a+m_{\theta})})\bigg]\nonumber\\
\le&\sum_{m_a=1}^{\infty}\sum_{i\in S_0}\mathbb P(|y_i-\hat y_{i}|\ge r_{i,m_a})+\sum_{m_{\theta}=1}^{\infty}\mathbb P(|\theta-\hat\theta|\ge r_{\theta,m_{\theta}})\nonumber\\
\le&\sum_{l=1}^{\infty}(2n+8)\delta_l\nonumber\\
=&\delta.
\end{align}

\noindent Thus for all iterations $t$, and all arms $i$, it is satisfied that
$|y_i-\hat y_{i,t}|\le r_{i,t}$ and $|\theta-\hat\theta_{t}|\le r_{\theta,t}$ with probability at least $1-\delta$.

\end{proof}

\section{Proof of Theorem 1}

\begin{thmm1}
For any $\delta>0$, the algorithm returns the correct outlier set $O^*$ with probability at least $1-\delta$.
\end{thmm1}
\iffalse
\begin{proof}
This claim is presented in Appendix C.
\end{proof}
\fi

%%% then with probability at least $1-\delta$, 

\begin{proof}
%From Lemma \ref{A_Occur}, 
From Lemma 5, $\mathcal A$ is satisfied with probability at least $1-\delta$. If $\mathcal A$ is satisfied, then at any round $t$, for any arm $i\in O$ we have, $\theta\le\hat\theta_t+r_{\theta,t}\le\hat y_{i,t}-r_{i,t}\le y_i$. Thus, all arms contained in $O$ are outliers. Besides, at any round $t$, for any arm $i\in N$ we have, $y_i\le\hat y_{i,t}+r_{i,t}\le\hat\theta_t-r_{\theta,t}\le \theta$. Thus, all arms contained in $N$ are normal arms. Consequently, the arm set returned by Algorithm 1 is the correct outlier set with probability at least $1-\delta$.
%algorithm 1 returns the correct outlier set with probability at least $1-\delta$.
\end{proof}

\section{Proof of Theorem 2}

\begin{appenlem7}\label{Delta_Relation}
If arm $i$ is determined as a normal arm or an outlier at round $t$, then for any $t'\le t$, we have that $\Delta_{i} < 2(r_{i,t'} + r_{\theta,t'})$ with probability at least $1-\delta$. 
%Besides, we also have that $\Delta_{i} \le 2(r_{i,t-1} + r'_{\theta,t-1})$ with probability at least $1-\delta$. 
%If arm $i$ is added to the normal set $\mathcal N$ at round $t$, we also have $\Delta_{\mu,i} \le 2(r_i^{t-1} + r_\mu^{t-1})$ with probability at least $1-\delta$.
\end{appenlem7}

\begin{proof}
%Denote by $\Delta_{\theta, i} = \theta - y_i$, and $\Delta_{i,\theta} = y_i - \theta$. 
\noindent Suppose for contradiction $\Delta_{i} \ge 2(r_{i,t'} + r_{\theta,t'})$. If arm $i$ is a normal arm and is identified at round $t$. Then we could infer that $t'$ is the round when arm $i$ is not yet confidently identified. 

\noindent Conditioned on $\mathcal A$, we have
\begin{align}
\hat\theta_{t'} -\hat y_{i,t'} + r_{i,t'} + r_{\theta,t'}\ge \nonumber\\
\theta -y_i\ge 2(r_{i,t'}+r_{\theta,t'}).
\end{align}
Then, 
\begin{align}
\hat\theta_{t'} - r_{\theta,t'} \ge \hat y_{i,t'} + r_{i,t'}.
\end{align}

\noindent The above equation implies that arm $i$ have been added to the normal set $N$ at round $t'$, which leads to a contradiction. Therefore, $\Delta_{i} < 2(r_{i,t'} + r_{\theta,t'})$.

\noindent Similarly, if $i$ is an outlier, conditioned on $\mathcal A$, we have $\Delta_{i} < 2(r_{i,t'} + r_{\theta,t'})$.

%From Lemma \ref{A_Occur},
\noindent From Lemma 5, we know that event $\mathcal A$ occurs with probability at least $1-\delta$. Therefore, we have that $\Delta_{i} < 2(r_{i,t'} + r_{\theta,t'})$ with probability at least $1-\delta$. 

\iffalse
\noindent Suppose for contradiction $\Delta_{i, \theta} \ge 2(r_{i,t-1} + r_{\theta,t-1})$. Conditioned on $\mathcal A$, we have
\begin{align}
\hat y_{i,t-1} - \hat \theta_{t-1} + r_{i,t-1} + r_{\theta,t-1} \ge\nonumber\\
 y_i - \theta \ge 2(r_{i,t-1} + r_{\theta,t-1}).
\end{align}
Then, 
\begin{align}
\hat y_i - r_{i,t-1}\ge\hat\theta_{t-1} + r_{\theta,t-1}.
\end{align}

\noindent The above equation implies that arm $i$ could be added to the outlier set $O$ at round prior to $t$, which leads to a contradiction. Thus, $\Delta_{i, \theta} \le 2(r_{i,t-1} + r_{\theta,t-1})$ with probability at least $1-\delta$.
\fi

%\noindent Similar analyses apply to the other cases.
\end{proof}

\begin{thmm2} 
With probability at least $1-\delta$, the total number of samples of Algorithm 1 could be bounded by
\iffalse
\begin{align}
&O\left(\sum_{i=1}^n (\displaystyle\frac{1}{\Delta_i^\theta})^2\log(\displaystyle\frac{n}{\delta}\log\displaystyle\frac{1}{\Delta_i^\theta})+\right).
\end{align}
\fi
\iffalse
\begin{align}
&O(\sum_{i=1}^n (\displaystyle\frac{1}{\Delta_i^2}\log(\sqrt{\displaystyle\frac{n}{\delta}}\displaystyle\frac{1}{\Delta_i^2}\max\{1,(k/\sigma_y)^2\}) \nonumber\\
&+ \max\{1,(k/\sigma_y)^2\}\displaystyle\frac{1}{\Delta_{\min}^2}\log(\sqrt{\displaystyle\frac{n}{\delta}}\displaystyle\frac{1}{\Delta_{\min}^2}\max\{1,(k/\sigma_y)^2\}))\nonumber.
\end{align}
\fi
\begin{align}
&O\Bigg(\sum_{i=1}^n \displaystyle\frac{1}{\Delta_i^2}\log\Big(\sqrt{\displaystyle\frac{n}{\delta}}\displaystyle\frac{1}{\Delta_i^2}\max\{1,(\displaystyle\frac{k}{\sigma_y})^2\}\Big) \nonumber\\
&+ \max\{1,(\displaystyle\frac{k}{\sigma_y})^2\}\displaystyle\frac{1}{\Delta_{\min}^2}\log\Big(\sqrt{\displaystyle\frac{n}{\delta}}\displaystyle\frac{1}{\Delta_{\min}^2}\max\{1,(\displaystyle\frac{k}{\sigma_y})^2\}\Big)\Bigg)\nonumber.
\end{align}
\end{thmm2}
\begin{proof}
Recalling that $r_{\theta, t} = (R+\sqrt{2}k(R'+2bR)/\sqrt{U_{\sigma,t}})\sqrt{\log(1/\delta_{t})/(2m_{\theta,t})}$, and that $r_{i, t} = R\sqrt{\log(1/\delta_{t})/(2m_{i,t})}$, where $\delta_{t} = 3\delta/((n+4)\pi^2 {t}^2)$, $U_{\sigma,t} = \min_{1\le\tau\le t}\{\hat\sigma_{y,\tau}^2+\epsilon_{\sigma,\tau}\}$, and $\epsilon_{\sigma,t} = (R'+2bR)\sqrt{\log(1/\delta_{t})/(2m_{\theta, t})}$. 

\iffalse
We have,
\begin{align}
(1+\displaystyle\frac{\sqrt{2}(R'+2bR)k}{R\sqrt{\hat\sigma_{y,t-1}^2+\epsilon_{\sigma,t-1}}})^2 m_{i,t-1}\le m_{\theta, t-1}
\end{align}

When $m_{\theta, t}$ equals to $\lfloor (1+\displaystyle\frac{\sqrt{2}(R'+2bR)k}{R\sqrt{\hat\sigma_{y,t}^2+\epsilon_{\sigma,t}}})^2 m_{i,t}\rfloor+1$,
\fi

\noindent Initially, $r_{\theta, t}>r_{i,t}$, therefore the algorithm samples for $\theta$. Suppose the algorithm switches to sample for the arms at round $\tilde t$. Then we have that
\begin{align}
    r_{\theta, \tilde t}\le r_{i,\tilde t},
\end{align}

\noindent and that
\begin{align}
    r_{\theta, \tilde t-1}\ge r_{i,\tilde t-1},
\end{align}
for any arm $i$ in the candidate set.

%Denote by $U_{\sigma,t} = \min\{\sqrt{\hat\sigma_{y,t}^2+\epsilon_{\sigma,t}}, U_{\sigma,t-1}\}$, and $U_{\sigma,0} = \sqrt{\hat\sigma_{y,0}^2+\epsilon_{\sigma,0}}$.

\noindent We have, 

\begin{align}
    r_{\theta, \tilde t} &= (R+\displaystyle\frac{\sqrt{2}(R'+2bR)k}{\sqrt{U_{\sigma,\tilde t}}})\sqrt{\displaystyle\frac{\log(1/\delta_{\tilde t})}{2m_{\theta, \tilde t}}}\\
    &\ge \displaystyle\frac{1}{2} (R+\displaystyle\frac{\sqrt{2}(R'+2bR)k}{\sqrt{U_{\sigma,\tilde t-1}}})\sqrt{\displaystyle\frac{\log(1/\delta_{\tilde t})}{2m_{\theta, \tilde t-1}}}\label{ineq1}\\
    &\ge \displaystyle\frac{1}{2}R\sqrt{\displaystyle\frac{\log(1/\delta_{\tilde t})}{2m_{i, \tilde t-1}}}\label{ineq2}\\
    &= \displaystyle\frac{1}{2}R\sqrt{\displaystyle\frac{\log(1/\delta_{\tilde t})}{2m_{i, \tilde t}}}\\
    &=\displaystyle\frac{1}{2} r_{i,\tilde t},
\end{align}

\noindent where Ineq.(\ref{ineq1}) is due to $m_{\theta,\tilde t}=m_{\theta,\tilde t-1}+1$, and Ineq.(\ref{ineq2}) is due to $r_{\theta, \tilde t-1}\ge r_{i,\tilde t-1}$.

\noindent Consequently, for any round $t$, we have
\begin{align}
r_{\theta, t}\ge \displaystyle\frac{1}{2} r_{i,t}.
\end{align}

\noindent Combined with $U_{\sigma,t}\ge\sigma_y^2$, we have that
% (Explain it here! Does it require that $\hat\sigma_t^2+\epsilon_t^2 = \min(\hat\sigma_t^2+\epsilon_t^2, \hat\sigma_{t-1}^2+\epsilon_{t-1}^2)$?)

\begin{align}
     m_{\theta,t}  
\le 4\Big(1+\displaystyle\frac{\sqrt{2}(R'+2bR)k}{R\sigma_y}\Big)^2 m_{i, t}.\label{N_theta_i}
\end{align}

\noindent Let $t_i$ be the round when arm $i$ is added to either the normal arm set $N$ or the outlier arm set $O$, and $t_i'$ the last round when arm $i$ is sampled in the process of sequential sampling prior to round $t_i$. Then, according to our algorithm, we have $r_{\theta, t_i'}\le r_{i,t_i'}$. %According to Lemma \ref{Delta_Relation}, we have that
According to Lemma 6, we have that
\begin{align}
\Delta_i &\le 2(r_{i, t_i'} + r_{\theta, t_i'})\nonumber\\
%&\le 2(1+2\sqrt{2})r_{i, t-1}.
&\le 4r_{i, t_i'}.
\end{align}

\noindent Thus, we have
\begin{align}
m_{i,t_i'}&\le\displaystyle\frac{8R^2}{\Delta_i^2} \log\Big(\displaystyle\frac{(n+4)\pi^2 t_i'^2}{3\delta}\Big)%\label{N_i_t}
\end{align}

\noindent From the definition of $t_i$ and $t_i'$, we also have that
\begin{align}
    m_{i,t_i+1} = m_{i,t_i'+1} = m_{i,t_i'}+1.
\end{align}

\noindent Therefore, 
\begin{align}
     m_{i,t_i+1}&\le\displaystyle\frac{8R^2}{\Delta_i^2} \log\Big(\displaystyle\frac{(n+4)\pi^2t_i'^2}{3\delta}\Big)+1\nonumber\\
     &\le\displaystyle\frac{8R^2}{\Delta_i^2} \log\Big(\displaystyle\frac{(n+4)\pi^2t_i^2}{3\delta}\Big)+1\label{N_i_t}
\end{align}

\noindent From Ineq.(\ref{N_theta_i}), we have that
\begin{align}
t_i &= m_{i,t_i+1} + m_{\theta, t_i+1}\nonumber\\
&\le\Big(1+4(1+\displaystyle\frac{\sqrt{2}(R'+2bR)k}{R\sigma_y})^2\Big)m_{i,t_i+1}\label{t}
\end{align}
%where the first inequality is due to Eq.(\ref{N_theta_i}).

\noindent Let $C = (n+4)\pi^2(1+4(1+\displaystyle\frac{\sqrt{2}(R'+2bR)k}{R\sigma_y})^2)^2$. Combine Ineq.(\ref{N_i_t}) with Ineq.(\ref{t}), we have
\begin{align}
&m_{i,t_i+1}\le 1+\displaystyle\frac{8R^2}{\Delta_i^2} \log(Cm_{i,t_i+1}^2/(3\delta)).
\end{align}

\noindent From Lemma 8 in \cite{antos2010active}, we have that
\begin{align}
&m_{i,t_i+1}\le\displaystyle\frac{32R^2}{\Delta_i^2}\log(\displaystyle\frac{16R^2}{\Delta_i^2})+\displaystyle\frac{16R^2}{\Delta_i^2}\log\Big(C/(3\delta)\Big)+2\nonumber\\
&=O\Big(\displaystyle\frac{1}{\Delta_i^2}\log(\sqrt{\displaystyle\frac{n}{\delta}}\displaystyle\frac{1}{\Delta_i^2}\max\{1,(k/\sigma_y)^2\})\Big).
\end{align}

\iffalse
\noindent Denote by $t_i$ the last round when arm $i$ is not yet determined. We have
\begin{align}
\Delta_i\le 2(r_i + r_\theta).
\end{align}

\noindent Combined with , we have
\begin{align}
t_i\le 
\end{align}

\noindent Thus, we have 
\begin{align}
T\le O()
\end{align}

\noindent If $r_i^{t-1}<\Delta_i^{\theta}/4$, then we have $r_\theta^{t-1}\ge\Delta_i^{\theta}/4$. Similarly, if $r_i^{t-1}<\Delta_i^{\theta}/4$, then we have $r_\theta^{t-1}\ge\Delta_i^{\theta}/4$.

\noindent Recalling that 
\begin{align}
& r_i^{t-1} = R\sqrt{\displaystyle\frac{\log(8/\delta)}{2(t-1)}}
\end{align}
and
\begin{align}
r_\theta^{t-1}\le \sqrt{2}(R/2 + 2(R+8b)k/\sigma_y)\sqrt{\log(8/\delta)/(t-1)}
\end{align}

\noindent We define the following events:
$F_{i,t} = \{i= \arg\max_{j\in\mathcal S_t\cup\theta} r_{j,t}\}$,$\forall i\in[n]$
$F_{\theta,t} = \{\theta= \arg\max_{j\in\mathcal S_t\cup\theta} r_{j,t}\}$

\noindent We will prove that $F_{i,t}$ ($i\in[n]$) happens at most $,,$ times, and $F_{\theta,t}$ happens at most $,,$ times. 

\noindent Suppose for contradiction that $F_{i,t}$ ($i\in[n]$) happens more than $8R^2\log(8/\delta)/\Delta_i^2$ times. Then we have
\begin{align}
& r_i = R\sqrt{\displaystyle\frac{\log(8/\delta)}{2m_i}}\le\Delta_i^\theta/4.
\end{align}

\noindent Thus, we have
\begin{align}
2(r_i+r_\theta)\le2(\Delta_i^\theta/4+\Delta_i^\theta/4) = \Delta_i.
\end{align}

\noindent From Lemma \ref{Delta_Relation} we have, arm $i$ is determined and thereby removed from the candidate set with probability at least $1-\delta$.

\begin{align}
r
\end{align}
\fi

\noindent Suppose arm $i^*$ is the last arm that is identified by the algorithm, and it is identified at round $t_{i^*}$. From Ineq.(\ref{N_theta_i}), we have that

\iffalse
Since $r_{\theta,t_{i^*}} \ge 1/2 r_{i^*,t_{i^*}}$, combined with $\hat\sigma_{y, t_{i^*}-1}^2+\epsilon_{\sigma,t_{i^*}-1}\ge\sigma_y^2$, we have that
\begin{align}
m_{\theta,t}\le(1+\displaystyle\frac{\sqrt{2}(R'+2bR)k}{R\sigma_y})^2 m_{i, t}+1. (This should be checked)
\end{align}
\fi

\begin{align}
&m_{\theta, t_{i^*}+1}\le 4(1+\displaystyle\frac{\sqrt{2}(R'+2bR)k}{R\sigma_y})^2m_{i^*,t_{i^*}+1}\nonumber\\
&= O(\max\{1,(\displaystyle\frac{k}{\sigma_y})^2\}\displaystyle\frac{1}{\Delta_{\min}^2}\log(\sqrt{\displaystyle\frac{n}{\delta}}\displaystyle\frac{1}{\Delta_{\min}^2}\max\{1,(\displaystyle\frac{k}{\sigma_y})^2\})).
\end{align}

\noindent Thus, the total number of samples of Algorithm 1 could be bounded by
\begin{align}
&O\Big(\sum_{i=1}^n \displaystyle\frac{1}{\Delta_i^2}\log\Big(\sqrt{\displaystyle\frac{n}{\delta}}\displaystyle\frac{1}{\Delta_i^2}\max\{1,(\displaystyle\frac{k}{\sigma_y})^2\}\Big) \nonumber\\
&+ \max\{1,(\displaystyle\frac{k}{\sigma_y})^2\}\displaystyle\frac{1}{\Delta_{\min}^2}\log\Big(\sqrt{\displaystyle\frac{n}{\delta}}\displaystyle\frac{1}{\Delta_{\min}^2}\max\{1,(\displaystyle\frac{k}{\sigma_y})^2\}\Big)\Big)\nonumber.
\end{align}
\end{proof}

\end{appendix}
%\fi
%%%% I think that perhaps we could try to change the parameter in order to remove the $log(n)$ contained in the time complexity.

\end{document}